\documentclass[runningheads,envcountsame,envcountsect]{llncs}

\usepackage[utf8]{inputenc}
\usepackage{geometry}
\usepackage{graphicx}
\usepackage{setspace}
\usepackage{amsmath}
\usepackage{proof}
\usepackage{url}
\usepackage{hyperref}
\usepackage{amssymb}
\usepackage{mathtools}
\usepackage{enumitem}
\usepackage{centernot}
\usepackage{tikz}
\usetikzlibrary{shapes.geometric}
\usetikzlibrary{patterns}
\usepackage{appendix}               
\usepackage{thmtools}

\newcommand{\co}[1]{\overline{#1}}
\newcommand{\parop}{\,|\,}
\newcommand{\res}[1]{(\nu #1)}
\newcommand{\fn}[1]{\mathtt{fn}(#1)}

\newcommand{\nil}{0}
\newcommand{\rel}{\mathrel{\mathcal{R}}}

\newcommand{\arrof}[1]{\xrightarrow{#1}_f}
\newcommand{\arrob}[1]{\xrightarrow{#1}_r}
\newcommand{\arro}[2]{\xrightarrow{#1} _{#2}}

\newcommand{\Arro}[2]{\overset{#1}{\Rightarrow} _{#2}}
\newcommand{\arroccs}[1]{\xrightarrow{#1}_{ccs}}

\newcommand{\sub}[2]{[#1/#2]}

\newcommand{\keys}[1]{\mathtt{keys}(#1)}
\newcommand{\std}[1]{\mathtt{std}(#1)}

\newcommand{\tostd}[1]{\mathtt{toStd}(#1)}

\newcommand{\delhist}[1]{\mathtt{delHist}(#1)}

\newcommand{\wsim}{\approx}
\newcommand{\swsim}{\cong}
\newcommand{\frsim}{\sim_{FR}}

\newcommand{\Keys}{\mathtt{Keys}}
\newcommand{\Rule}[2]{\displaystyle{\frac{#1}{#2}}}
\newcommand{\key}[1]{[#1]}

\newcommand{\tr}[2]{\iftr #1 \else #2 \fi}
\newif\iftr

\pgfdeclarepatternformonly{north east spaced lines}{\pgfqpoint{0pt}{0pt}}{\pgfqpoint{80pt}{80pt}}{\pgfqpoint{80pt}{80pt}}%
{
    \pgfsetlinewidth{0.65pt}
    \pgfpathmoveto{\pgfqpoint{0pt}{0pt}}
    \pgfpathlineto{\pgfqpoint{103pt}{103pt}}
    \pgfusepath{stroke}
}

\trfalse 

\tr{
  \usepackage[appendix=append]{apxproof}
  }
   {
     \usepackage[appendix=strip]{apxproof}
     }

\begin{document}

\title{On Weak Bisimilarities in CCSK}
\titlerunning{On Weak Bisimilarities in CCSK} 
%
\author{Baptiste Vallée\inst{1}\orcidID{0009-0002-1268-3761} \and Ivan Lanese\inst{2}\orcidID{0000-0003-2527-9995}\\
\email{baptistevalleedupont@gmail.com, ivan.lanese@gmail.com}}
\authorrunning{B.~Vallée, I.~Lanese} 
\institute{École Normale Supérieure Paris-Saclay (France)\and Olas Team, University of Bologna/INRIA (Italy)}

\maketitle   
This is a pre-copy-editing, author-produced PDF of an article accepted for publication in RC 2026 following peer review. The definitive publisher-authenticated version is available online at https://link.springer.com/chapter/10.1007/978-3-032-30839-9
\begin{abstract}
In the context of CCSK, a reversible extension of CCS, we study
different notions of bisimilarity (strong/weak,
forward-only/reversible) and highlight their differences and
commonalities. In particular, for the weak reversible case, not previously
studied in the literature, we propose two variants, dubbed directional and mixed bisimilarity, depending on
whether $\tau$ actions should be in the same direction
(forward/backward) as the action being matched or not.
We show, in particular, that mixed bisimilarity is a congruence and completely abstracts away from $\tau$ actions.
\keywords{CCS, CCSK, Reversible Computation, Weak Bisimilarity, Behavioral Theory}
\end{abstract}

\section{Introduction}
Building concurrent systems is challenging due to the complexity of
reasoning about numerous possible interleavings, yet concurrency is
essential in modern systems like the Internet, cloud computing, and
parallel processing. Reversible computing, which allows systems to
execute both forwards and backwards, recovering past states, has
significant applications in low-energy
computing~\cite{landauer1961irreversibility},
simulation~\cite{carothers1999efficient}, biological
modeling~\cite{cardelli2011reversibility,phillips2012reversible}, and
program
debugging~\cite{engblom2012review,mcnellis2017time,lanese2018cauder}. Many
of these applications involve concurrent systems, leading to the
development of reversible extensions of concurrent process calculi
such as CCS~\cite{danos2004reversing,phillips2007reversing} and
the \(\pi\)-calculus~\cite{CristescuKV13}, and even of concurrent
programming languages such as Erlang~\cite{LaneseNPV18} and Go~\cite{OguchiYY25}.

A main notion in the theory of process calculi is the notion of
bisimilarity~\cite{Sangiorgi2011}, allowing one to prove two processes
equivalent, e.g., to prove an implementation equivalent to a more
abstract specification. In particular, bisimilarity requires
equivalent processes to be able to match each other actions, and in
doing so going to processes which are still equivalent.  While strong
and weak bisimilarity (weak bisimilarity differs from the strong one
as the former abstracts away from internal actions, focusing only on
interactions with the context) have been extensively studied in
concurrent systems, the literature lacks an analysis of weak
bisimilarities in a reversible setting. Our study addresses this gap by
investigating the relationships between different notions of
bisimilarity (strong/weak, forward-only/reversible) in the context of
CCSK~\cite{phillips2007reversing}, a causal-consistent reversible
extension of Milner CCS~\cite{milner1989communication}.  In
particular, in the definition of weak reversible bisimilarity, a main
decision is whether auxiliary $\tau$ actions (representing internal steps) allowed in the simulation
of some action $\alpha$ need to be in the same direction as $\alpha$
or not. The two alternatives lead to different equivalences.


We consider this work as a first step in the exploration of weak
bisimilarity in a reversible setting, paving the way for a deeper
exploration in the future. We claim as our main contributions the
proposal of two notions of weak reversible bisimilarity (mixed
bisimilarity in Definition~\ref{def:mbis} and directional bisimilarity
in Definition~\ref{def:dbis}, both in Section~\ref{sec:bisim}), the study of
the relations between different notions of bisimilarity in CCSK
(Section~\ref{sec:relations}), and the study of which of these notions
are congruences
(Section~\ref{sec:properties}). In particular, we show that mixed
bisimilarity is a congruence (Theorem~\ref{th:mixediscongr}) and
completely abstracts away from $\tau$ actions
(Proposition~\ref{prop:mixedtau} and Theorem~\ref{th:axiom_approx_m}).
Another surprising result is that extending CCS bisimilarities to CCSK
gives equivalences which are not congruences
(Proposition~\ref{prop:fnocongr}), even when they are congruences in
CCS, as in the case of strong bisimilarity.

\section{CCSK}
In this section we recall the main elements of CCSK, while referring to~\cite{phillips2007reversing} for further details.
We assume an infinite set of \textit{Names} $\mathcal{A}$, ranged over by $a,b,c,\ldots$, and a disjoint infinite set of \textit{Co-names} $\overline{\mathcal{A}}$, ranged over by $\overline{a},\overline{b},\overline{c},\ldots$, where $ \overline{*} $ is an operator such that $\overline{\overline{a}} = a$.
We call \textit{actions} the elements of $ \mathcal{A} \cup \overline{\mathcal{A}} \cup \{\tau\}$ where $\tau \notin \mathcal{A}$ and $\overline{\tau}$ is undefined,
ranged over by $ \alpha, \beta, \ldots$ 
Intuitively, names represent input actions, co-names represent output actions, and $\tau$ is an internal synchronisation.
CCS processes, which we shall also call \emph{standard} processes, are given by:
\[ P,Q := 0 \mid\mid \alpha.P \mid\mid P+Q \mid\mid P \parop Q \mid\mid \res a P \]
Intuitively, $0$ is the inactive process, $\alpha.P$ is a process that
performs action $\alpha$ and continues as $P$, $P+Q$ is
nondeterministic choice, $P \parop Q$ is parallel composition and
restriction $\res a P$ binds name $a$ and the corresponding co-name $\overline{a}$ inside $P$. A name
is \emph{bound} if it is inside the scope of a restriction
operator, \emph{free} otherwise.
Function $\fn{P}$
computes the set of free names in process $P$.
We set this convention: unary operators bind stronger than binary operators.

CCSK extends CCS with the possibility of executing backwards. In order to
remember which input interacted with which output while going forwards,
fresh \emph{keys} are created at each forward step, and the same key is
used to label an input and the corresponding output during a synchronisation.

We denote the set of keys by $\Keys$, ranged over by $m,n,k,\ldots$.
Prefixes, ranged over by $\pi$, are of the form $\alpha[m]$ or
$\alpha$. The former denotes that $\alpha$ has already been executed,
the latter that it has not. 

CCSK processes are given by:
\[ P,Q := \nil \mid\mid \pi.P \mid\mid P+Q \mid\mid P \parop Q \mid\mid \res a P \]
hence they are like CCS processes but for the fact that prefixes
may be labelled with a key. In the following, we may drop trailing $0$s.

\begin{definition}[Context]
A CCSK \emph{context} is a process with a hole, as generated by the
grammar below:
\[
C := \bullet \mid\mid \pi.C \mid\mid C+Q \mid\mid P+C \mid\mid C \parop Q \mid\mid P \parop C \mid\mid \res a C
\]
We denote with $C[P]$ the process obtained by replacing $\bullet$ with
$P$ inside $C$.
\end{definition}

We use predicate $\std{P}$ to mean that $P$ is standard, that
is none of its actions has been executed, hence it has no keys. We
assume function $\tostd{P}$ that takes a CCSK process $P$ and gives
back the standard process obtained by removing all keys from $P$.

We take from~\cite[Def.~2.1]{lanese2021forward} the notions of free
and bound keys.

\begin{definition}[Free and bound keys]\label{def:free}
A key $k$ is \emph{bound} in a process $X$ iff it occurs either twice,
attached to complementary prefixes, or once, attached to a
$\tau$ prefix. A key $k$ is \emph{free} if it occurs once, attached to
a non-$\tau$ prefix.
\end{definition}

Figure~\ref{ForwardCCSK} shows the forward rules of CCSK.  Backward
rules in Figure~\ref{ReverseCCSK} are obtained from forward rules by
reversing the direction of transitions.  Both relations rely on a
definition of structural congruence allowing one to $\alpha$-convert
bound keys, applicable only at top level (this condition is needed to ensure that there are no other occurences of $n$ in the context):

\[P \equiv P\sub{n}{m} \qquad m \text{ bound in } P, n \notin \keys{P}\]

\begin{figure}[t] \[
\begin{array}{c}
\text{(TOP)}\quad \Rule {\std{P}} {\alpha.P \arrof{\alpha\key m}
  \alpha\key m.P} \qquad \text{(PREFIX)}\quad \Rule {P
  \arrof{\beta\key n} P'} {\alpha\key m.P \arrof{\beta\key n}
  \alpha\key m.P'} \; m \neq n
\\[15pt]
\text{(CHOICE)}\quad
\Rule
{P \arrof{\alpha \key m} P' \quad \std{Q}}
{P+Q \arrof{\alpha \key m} P'+Q}
\qquad
\Rule
{Q \arrof{\alpha \key m} Q' \quad \std{P}}
{P+Q \arrof{\alpha \key m} P+Q'}
\\[15pt]
\text{(PAR)}\quad
\Rule
{P \arrof{\alpha\key m} P'\quad m \notin \keys Q }
{P \parop Q \arrof{\alpha\key m} P' \parop Q}
\qquad
\Rule
{Q \arrof{\alpha\key m} Q' \quad m \notin \keys P }
{P \parop Q \arrof{\alpha\key m} P \parop Q'}\\[15pt]
\text{(SYNCH)}\quad
\Rule
{P \arrof{\alpha\key m} P' \quad Q \arrof{\co {\alpha} \key m} Q'}
{P \parop Q \arrof {\tau\key m}  P' \parop Q'}
\quad (\alpha \neq \tau)
\\[15pt]
\text{(RES)}\quad
\Rule
{P \arrof {\alpha\key m} P'}
{\res a P  \arrof {\alpha\key m} \res a P'}
\ \alpha \notin \{a,\co a\}
\\[15pt]
\text{(EQUIV)}\quad
\Rule
{P \equiv Q \quad Q \arrof{\alpha \key m} Q' \quad Q' \equiv P'}
{P \arrof{\alpha \key m} P'}\;
\begin{array}{c}
\text{applicable only}\\ \text{at top level}
\end{array}
\end{array}
\] \caption{Forward SOS rules for CCSK} \label{ForwardCCSK}
\end{figure}
\begin{figure}[t] \[
\begin{array}{c}
\text{(BK-TOP)}\quad
\Rule
{\std{P}}
{\alpha\key m.P \arrob{\alpha\key m} \alpha.P}
\qquad
\text{(BK-PREFIX)}\quad
\Rule
{P \arrob{\beta\key n} P'}
{\alpha\key m.P \arrob{\beta\key n} \alpha\key m.P'}
\; m \neq n
\\[15pt]
\text{(BK-CHOICE)}\quad
\Rule
{P \arrob{\alpha \key m} P' \quad \std{Q}}
{P+Q \arrob{\alpha \key m} P'+Q}
\qquad
\Rule
{Q \arrob{\alpha \key m} Q' \quad \std{P}}
{P+Q \arrob{\alpha \key m} P+Q'}
\\[15pt]
\text{(BK-PAR)}\quad
\Rule
{P \arrob{\alpha\key m} P'\quad m \notin \keys Q }
{P \parop Q \arrob{\alpha\key m} P' \parop Q}
\qquad
\Rule
{Q \arrob{\alpha\key m} Q' \quad m \notin \keys P }
{P \parop Q \arrob{\alpha\key m} P \parop Q'}\\[15pt]
\text{(BK-SYNCH)}\quad
\Rule
{P \arrob{\alpha\key m} P' \quad Q \arrob{\co {\alpha} \key m} Q'}
{P \parop Q \arrob {\tau\key m}  P' \parop Q'}
\quad (\alpha \neq \tau)
\\[15pt]
\text{(BK-RES)}\quad
\Rule
{P \arrob {\alpha\key m} P'}
{\res a P  \arrob {\alpha\key m} \res a P'}
\ \alpha \notin \{a,\co a\}
\\[15pt]
\text{(BK-EQUIV)}\quad
\Rule
{P \equiv Q \quad Q \arrob{\alpha \key m} Q' \quad Q' \equiv P'}
{P \arrob{\alpha \key m} P'}\;
\begin{array}{c}
\text{applicable only}\\ \text{at top level}
\end{array}
\end{array}
\] \caption{Reverse SOS rules for CCSK} \label{ReverseCCSK}
\end{figure}

Rule (TOP) allows a prefix to execute. The rule generates a key
$m$. Freshness of $m$ is guaranteed by the side conditions of the
other rules (cf.~rule (PAR)). Rule (PREFIX) states that an executed prefix does not
block execution. The two rules for (CHOICE) and the two for (PAR)
allow processes to execute inside a choice or a parallel
composition. The side condition of rule (CHOICE) ensures that at most
one branch can execute. Rule (SYNCH) allows two complementary actions
to synchronise producing a $\tau$. The key of the two actions needs to
be the same. Rule (RES) allows an action which does not involve the
restricted name to propagate through restriction.

The forward semantics of a CCSK process is the smallest relation
$\arrof{}$ closed under the rules in Figure~\ref{ForwardCCSK}.
Analogously, its backward semantics
is the smallest relation
$\arrob{}$ closed under the rules in Figure~\ref{ReverseCCSK}.
The semantics is the union of the two relations.
From now on, we let $\vartheta$ range over $\alpha[m]$ and $\mu$ range over $\alpha[m]$ with $\alpha \neq \tau$. Let $x,y,\ldots$ range over the set of directions $\{f,r\}$, for forward and reverse.

As standard in reversible computing (see, e.g., \cite{phillips2007reversing} or
the notion of coherent process in~\cite{danos2004reversing}), all the
developments consider only processes reachable from a standard
process.

\begin{definition}[Reachable process]\label{def:reachable}
  A process $Q$ is \emph{reachable} iff there exists a standard process $P$
  and a finite sequence of transitions from $P$ to $Q$.
\end{definition}

\section{Bisimilarities}\label{sec:bisim}
In this section we introduce the various notions of bisimilarity we
study. We start from CCS bisimilarities, and then move to
bisimilarities specific for CCSK. Note that CCS can be seen as a subset of CCSK,
considering only standard processes and only forward semantics.
Hence, we will extend CCS bisimulations to CCSK by just considering
forward transitions of CCSK terms. As a consequence, history becomes
inaccessible, hence ideally irrelevant. However, requiring that a
transition is matched by a transition with identical label would leak
information on which keys are used in the history, as shown by the
following example.

\begin{example}
  We have $b \arrof{b[n]} b[n]$ while no transition with the same
  label is enabled from $a[n].b$.  The second process hence cannot
  match the transition of the first one in the bisimulation game if
  equality of labels is required. Hence, keys of forward transitions
  leak information about which keys are used in the history.
\end{example}

In order to avoid this issue, we will remove keys from the labels.
Hence, we define CCS semantics for CCSK processes as follows:
\begin{definition}[CCS semantics for CCSK processes]
Given a CCSK process $P$, $P \xrightarrow{\alpha} P'$ iff there is $k$ such that $P \arrof{\alpha[k]} P'$.
\end{definition}
We show now that the semantics above, defined on all CCSK processes, is indeed strictly related to the classical semantics of CCS defined in~\cite[Chapter 5]{CCS}\footnote{Actually, the semantics in~\cite[Chapter 5]{CCS} includes additional operators, as well as value passing. We consider its restriction to the operators we use in CCSK.}, which we denote as $\arroccs{}$.
Let $\delhist{}$ be the function that extracts the standard part of a process.
\begin{definition}[$\delhist{}$]
  The $\delhist{}$ function is inductively defined as follows:
  \[
  \begin{array}{rcl}
    \delhist{P} &=& P \textrm{ if } \std{P}\\
    \delhist{a[n].P} &=& \delhist{P}\\
    \delhist{P+Q} &=& \delhist{P} \textrm{ if }\neg \std{P}\\
    \delhist{P+Q} &=& \delhist{Q} \textrm{ if }\neg \std{Q}\\
    \delhist{P \parop Q} &=& \delhist{P} \parop \delhist{Q}\\
    \delhist{\res a P} &=& \res a \delhist{P}
  \end{array}
  \]
\end{definition}
The following proposition holds.

\begin{proposition}
Let $P,P'$ be CCSK processes, and $Q$ a CCS process.
If $P \xrightarrow{\alpha} P'$ then $\delhist{P} \arroccs{\alpha} \delhist{P'}$.
If $\delhist{P} \arroccs{\alpha} Q$ then there exists $P'$ in CCSK such that $P \xrightarrow{\alpha} P'$ and $Q=\delhist{P'}$.
\end{proposition}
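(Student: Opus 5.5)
Both directions are proved by induction, but on different objects: the first part by induction on the derivation of $P \arrof{\alpha[k]} P'$, and the second by induction on the structure of $P$ (equivalently, on the derivation of $\delhist{P} \arroccs{\alpha} Q$). Processes are considered up to the top-level $\equiv$ on bound keys, which $\delhist{}$ ignores anyway since it removes all executed prefixes; so rule (EQUIV) is harmless. Throughout, the clause $\delhist{a[n].P} = \delhist{P}$ is read for any executed prefix $\alpha[n]$, and I will use a few elementary facts about $\delhist{}$:
\begin{itemize}
\item if $\std{P}$ then $\delhist{P}=P$;
\item $\delhist{}$ commutes with $\parop$ and $\res a$;
\item in a reachable process, at most one branch of a choice is non-standard (this follows from the side condition of (CHOICE)), so the case split in the definition is well defined.
\end{itemize}

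For the first part I go through the rules. (TOP) gives $\alpha.P \arrof{\alpha[m]} \alpha[m].P$ with $\std{P}$. Then $\delhist{\alpha.P}=\alpha.P \arroccs{\alpha} P = \delhist{\alpha[m].P}$ by the CCS prefix rule. (PREFIX) has $\delhist{\alpha[m].P}=\delhist{P}$ and $\delhist{\alpha[m].P'}=\delhist{P'}$, so the induction hypothesis applies directly.

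(CHOICE) is the delicate case. From $P+Q \arrof{\alpha[m]} P'+Q$ with $\std{Q}$ there are two subcases:
\begin{itemize}
\item If $P$ is non-standard, then $\delhist{P+Q}=\delhist{P}$, and $P'$ is non-standard too, since forward steps only add keys. Hence $\delhist{P'+Q}=\delhist{P'}$ and the induction hypothesis concludes.
\item If $P$ is standard, then $\delhist{P+Q}=P+Q$. The induction hypothesis gives $P \arroccs{\alpha} \delhist{P'}$, so the CCS sum rule gives $P+Q \arroccs{\alpha} \delhist{P'}$. Since $P'$ is non-standard, $\delhist{P'+Q}=\delhist{P'}$.
\end{itemize}
(PAR), (SYNCH) and (RES) follow from the induction hypothesis, compositionality of $\delhist{}$, and the CCS par, com and restriction rules. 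For (SYNCH), the key is dropped in the CCS labels $\alpha,\co\alpha$.

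For the second part I do induction on $P$. If $\std{P}$, then $\delhist{P}=P$ and I show by an auxiliary induction on the CCS derivation that a standard $P$ with $P\arroccs{\alpha}Q$ has $P \arrof{\alpha[k]} P'$ with $\delhist{P'}=Q$. This auxiliary induction needs care with freshness: I choose $k$ fresh for the whole of $P$, and the side conditions of (PAR) hold because all keys occurring are fresh. For (SYNCH), the same fresh $k$ is used on both sides. The remaining cases of the second part are as follows.
\begin{itemize}
\item $P=\alpha[n].P_1$: apply the induction hypothesis to $P_1$, obtaining $P_1\arrof{\alpha'[k]}P_1'$, and choose $k\neq n$ and fresh, so that (PREFIX) applies.
\item $P=P_1+P_2$ with $P_1$ non-standard: then $P_2$ is standard, so (CHOICE) applies after the induction hypothesis on $P_1$. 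The case with $P_2$ non-standard is symmetric.
\item Parallel composition and restriction: invert the last CCS rule on $\delhist{P_1}\parop\delhist{P_2}$ or $\res a\delhist{P_1}$, apply the induction hypothesis to the components, and rebuild the step with (PAR), (SYNCH) or (RES).
\end{itemize}

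The main obstacle is key management in the second part. The keys produced by the induction hypotheses must be fresh with respect to the whole process, which is needed for the side conditions $m\notin\keys{Q}$ of (PAR) and for using a common key in (SYNCH). To handle this, I strengthen the induction hypothesis of the second part to: for every finite set $K$ of keys (in particular every $K\supseteq\keys{P}$) and every $k\notin K$, $P\arrof{\alpha[k]}P'$. This is easy to carry through every case, and it makes (SYNCH) immediate by choosing the same $k$ for both components.
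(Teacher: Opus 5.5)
Your proposal is correct and takes essentially the paper's approach: the paper's proof is just ``by rule inspection'', and your induction on the forward derivation (first claim) and on the structure of $P$ (second claim), with a freshness-strengthened hypothesis, supplies the key-management details the paper leaves implicit. Do tidy the wording of that strengthening to ``for every $k\notin\keys{P}$ there is $P'$ with $P\arrof{\alpha[k]}P'$ and $\delhist{P'}=Q$'', since as literally written, taking $K=\emptyset$ would also permit keys already occurring in $P$.
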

\begin{proof}
  By rule inspection.\qed
\end{proof}

\subsubsection{Bisimilarities for CCS}
We start with the classical notion of CCS bisimilarity, extended as mentioned above.

\begin{definition}[Strong Bisimulation]\label{def:sbis}
  A symmetric relation $\rel$ on CCSK processes is a \emph{strong bisimulation} if
  whenever $P \rel Q$:
  \begin{itemize}
  \item\label{cond:sbis} if $P \xrightarrow{\alpha} P'$ then there exists $Q'$ such that $Q \xrightarrow{\alpha} Q'$ and $P' \rel Q'$.
  \end{itemize}
Let $\sim$ be the largest strong bisimulation.   
Two CCSK processes $P, Q$ are \emph{strongly bisimilar} if $(P,Q) \in \sim$. 
\end{definition}
We give below some examples of strongly bisimilar processes. 
\begin{example}[Strongly Bisimilar CCSK Processes]\label{ex:sbisp}\ 
    \begin{enumerate}
        \item $a+a \sim a$
        \item $a \parop a \sim a.a$
        \item\label{sccs:nomem} $a[m] \sim b[n]$  
        \item\label{sccs:exp} $a \parop b \sim a.b + b.a$
    \end{enumerate}
\end{example} 

Note that Item~\ref{sccs:nomem} shows that strong bisimilarity (like
all other CCS bisimilarities) abstracts away from the history.
Item~\ref{sccs:exp} is actually an instance of the Expansion
Law~\cite{milner1989communication}, a cornerstone of the theory of
classical CCS bisimilarity, whose general form is as follows:
\begin{eqnarray*}
P_1 \parop P_2 & =& \sum \{\alpha.(P'_1 \parop P_2) : P_1 \xrightarrow{\alpha} P'_1\} +
\sum \{\alpha.(P_1 \parop P'_2) : P_2 \xrightarrow{\alpha} P'_2\} +\\
&&\sum \{\tau.(P'_1 \parop P'_2) : P_1 \xrightarrow{\alpha} P'_1, P_2 \xrightarrow{\co{\alpha}} P'_2, \alpha \neq \tau\}
\end{eqnarray*}
where $\sum$ is $n$-ary choice.

It is well-known~\cite{phillips2007reversing} that the Expansion Law does not
hold for reversible calculi, and indeed we can provide a counterexample
using strong forward-reverse bisimilarity (cf.~Def.~\ref{def:frbis} and Ex.~\ref{ex:frbisp}).

In order to introduce weaker notions of bisimilarity we need the notation below.
Let \(\Rightarrow = (\xrightarrow{\tau}) ^*\) be the reflexive and transitive closure of \(\tau\) steps.

\begin{definition}[Weak Bisimulation]\label{def:wbis}
  A symmetric relation $\rel$ on CCSK processes is a \emph{weak bisimulation} if
  whenever $P \rel Q$:
  \begin{itemize}
  \item\label{cond:wbistau} if $P \xrightarrow{\tau} P'$ then there exists $Q'$ such that $Q \Rightarrow Q'$ and $P' \rel Q'$;
  \item\label{cond:wbis} if $P \xrightarrow{\alpha} P'$ with $\alpha \neq \tau$ then there exists $Q'$ such that $Q \Rightarrow\xrightarrow{\alpha}\Rightarrow Q'$ and $P' \rel Q'$.
  \end{itemize}
Let $\wsim$ be the largest weak bisimulation.   
Two CCSK processes $P, Q$ are \emph{weakly bisimilar} if $(P,Q) \in \wsim$. 
\end{definition}

\begin{example}[Weakly Bisimilar CCSK Processes]\label{ex:wbisp}\ 
    \begin{enumerate}
        \item $\tau.a \wsim a$
        \item $a+\tau.a \wsim a$
        \item\label{sccs:nomem} $a[m] \wsim b[n]$  
        \item\label{sccs:exp} $a \parop b \wsim a.b + b.a$
    \end{enumerate}
\end{example} 

We introduce also an intermediate notion, taken from~\cite{MontanariS91}, where $\tau$ steps need to be matched by at least one $\tau$ step.

\begin{definition}[Semi-Weak Bisimulation]\label{def:congbis}
  A symmetric relation $\rel$ on CCSK processes is a \emph{semi-weak bisimulation} if
  whenever $P \rel Q$:
  \begin{itemize}
      \item \label{cond:congbis} if $P \xrightarrow{\alpha} P'$ then there is $Q'$ such that $Q \Rightarrow \xrightarrow{\alpha} \Rightarrow  Q'$ and $P' \rel Q'$.
  \end{itemize}
  Let $\swsim$ be the largest semi-weak bisimulation.
Two CCSK processes $P, Q$ are semi-weak bisimilar if $(P,Q) \in \swsim$.
\end{definition}

\begin{example}[Semi-Weakly Bisimilar Processes]\label{ex:congbisp}\ 
    \begin{itemize}
        \item $a+a \swsim a$
        \item $a \parop a \swsim a.a $
        \item $a[n] \swsim b[m]$
        \item $a + \tau.a \swsim \tau.a$  
    \end{itemize}
\end{example} 

\subsubsection{Bisimilarities for CCSK}
The bisimulations below make sense only in reversible calculi, since they consider both forward and backward transitions.
We start with the notion of (revised) forward-reverse bisimulation from~\cite{lanese2021forward}.

\begin{definition}[Strong Forward-Reverse Bisimulation]\label{def:frbis}
  A symmetric relation $\rel$ is a \emph{strong forward-reverse bisimulation} (also called FR-bisimulation) if
  whenever $P \rel Q$:
  \begin{itemize}
  \item
  \label{cond:frbis} if $P \arro \vartheta x P'$ then there is $Q'$ such that $Q \arro \vartheta x Q'$ and $P' \rel Q'$.
  \end{itemize}
  Let $\frsim$ be the largest FR-bisimulation.
  Two CCSK processes $P$, $Q$ are FR-bisimilar if $(P,Q) \in \frsim$.
\end{definition}

\begin{example}[FR Bisimilar Processes]\label{ex:frbisp}\ 
    \begin{itemize}
        \item $a+a \frsim a$
        \item $a \parop a \not\frsim a.a $
        \item $a \parop b \not\frsim a.b + b.a $ 
    \end{itemize}
    As expected, instances of the Expansion Law do not hold any more.
\end{example} 

We now introduce notations to study the weak bisimulations in CCSK, and to manipulate \(\tau\) steps easily.
Let \(\Rightarrow _x = (\xrightarrow{\tau} _x) ^*\) the reflexive and transitive closure of \(\tau\) steps in the direction \(x\). 
Let mixed \(\tau \) reachability be \(\Rightarrow _m = (\xrightarrow{\tau}_f \cup \xrightarrow{\tau}_r) ^* \). This is an equivalence relation. Let \(P \Arro \mu {m,x} P' \) iff \( \exists Q,Q' \text{ s.t. } (P \Rightarrow _m Q \arro \mu x Q' \Rightarrow _m P') \).

We now define two variants of weak reversible bisimulation, which
differ in whether the $\tau$ steps used to match some action $\mu$ need to be
in the same direction as $\mu$ or not.
\begin{definition}[Weak Mixed Bisimulation]\label{def:mbis}
  A symmetric relation $\rel$ is a \emph{weak mixed bisimulation} (called mixed bisimulation) if
  whenever $P \rel Q$:
  \begin{enumerate}
    \item \label{cond:mtau} if $P \arro \tau x P'$ then there exists $Q'$ such that $Q \Rightarrow_m Q'$ and $P' \rel Q'$;
    \item \label{cond:mmu} if $P \arro \mu x P'$ then there exists $Q'$ such that $Q \Arro \mu {m,x} Q'$ and $P' \rel Q'$.
  \end{enumerate}
Let $\approx_m $ be the largest mixed weak bisimulation. 
Two CCSK processes $P, Q$ are \emph{weakly mixed bisimilar} if $(P, Q) \in \approx_m$. 
\end{definition}
Intuitively, weak mixed bisimilarity abstracts away all the \(\tau\) steps.

\begin{example}[Mixed Bisimilar Processes]\label{ex:mbisp}\ 
    \begin{itemize}
        \item $\tau.a \approx_m a$
        \item $\tau \parop a \approx_m a$
        \item $\tau + a \approx_m a$  
        \item $\tau[m] + a \approx_m a$
    \end{itemize}
\end{example} 
We also introduce a "directional" weak bisimulation, in which the \(\tau\) steps have to be in the same direction as the action \(\mu\).
Let \(P \Arro \mu {d,x} P'\) iff \( \exists Q,Q' \text{ s.t. } (P \Rightarrow _x Q \arro \mu x Q' \Rightarrow _x P') \).

\begin{definition}[Weak Directional Bisimulation]\label{def:dbis}
  A symmetric relation $\rel$ is a \emph{weak directional bisimulation} (called directional bisimulation) if
  whenever $P \rel Q$:
  \begin{enumerate}
    \item \label{cond:dtau} if $P \arro \tau x P'$ then there exists $Q'$ such that $Q \Rightarrow _x Q'$ and $P' \rel Q'$;
    \item \label{cond:dmu} if $P \arro \mu x P'$ then there exists $Q'$ such that $Q \Arro \mu {d,x} Q'$ and $P' \rel Q'$;
  \end{enumerate}
  Let $\approx _d$ be the largest directional bisimulation.
  Two CCSK processes $P, Q$ are directionally bisimilar if $(P, Q) \in \approx_d$.
\end{definition}

\begin{example}[Directionally Bisimilar Processes]\label{ex:dbisp}\ 
    \begin{itemize}
        \item $\tau.a \approx_d a$
        \item $\tau \parop a \approx_d a$
        \item $\tau + a \not\approx_d a$
    \end{itemize}
\end{example}

\section{Relations between Bisimilarities}\label{sec:relations}
We now compare the notions of bisimilarity introduced in the previous section.
Interestingly, the considered notions give rise to two hierarchies.

\begin{proposition}[Hierarchies of Bisimilarities]\label{prop:hierarchies}
\begin{enumerate}
    \item\label{lemma:ksbis_sub_sbis}\label{lemma:sbis_sub_swbis}\label{lemma:swbis_sub_wbis} $\frsim \subset \sim \subset \swsim \subset \wsim$;
    \item\label{lemma:kwdbis_sub_wbis} $\frsim \subset \approx_d \subset 
      \begin{cases}
        \approx_m \\ \wsim
      \end{cases}
        $.
\end{enumerate}
\end{proposition}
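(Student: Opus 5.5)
Each inclusion splits into a non-strict inclusion and a separating example. For the non-strict part we show that the largest bisimulation of the finer kind is itself a bisimulation of the coarser kind. Then $\frsim \subseteq \sim$ etc. follow because each equivalence is the largest relation of its kind.

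\emph{Hierarchy 1.} We will show $\frsim$ is a strong bisimulation. If $P \frsim Q$ and $P \xrightarrow{\alpha} P'$, then $P \arrof{\alpha[k]} P'$ for some $k$. By Definition~\ref{def:frbis}, $Q \arrof{\alpha[k]} Q'$ with $P' \frsim Q'$, hence $Q \xrightarrow{\alpha} Q'$. Next, $\sim$ is a semi-weak bisimulation by choosing empty $\Rightarrow$ around the matching step. Finally, $\swsim$ is a weak bisimulation. A visible step is matched as in Definition~\ref{def:wbis}, since the semi-weak clause is the same. A $\tau$ step is matched by $Q \Rightarrow\xrightarrow{\tau}\Rightarrow Q'$, which is in particular $Q \Rightarrow Q'$.

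Strictness uses the examples already given:
\begin{itemize}
\item $a\parop a \sim a.a$ but $a \parop a \not\frsim a.a$ (Example~\ref{ex:frbisp}). The underlying reason is that after two forward steps one side can undo either key, the other only the inner one.
\item $a+\tau.a \swsim \tau.a$ but these are not strongly bisimilar, since $\tau.a$ cannot do $a$ directly.
\item $\tau.a \wsim a$ but $\tau.a \not\swsim a$, since the $\tau$ of $\tau.a$ cannot be matched by $a$ with at least one $\tau$ step.
\end{itemize}

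\emph{Hierarchy 2.} We will show $\frsim$ is a directional bisimulation. A step $P\arro{\tau[k]}{x}P'$ is matched by the single step $Q\arro{\tau[k]}{x}Q'$, which lies in $\Rightarrow_x$. A step labelled $\mu$ is matched with empty $\tau$-sequences.

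Next, $\approx_d$ is a mixed bisimulation, because $\Rightarrow_x\subseteq\Rightarrow_m$ and hence $\Arro{\mu}{d,x}\subseteq\Arro{\mu}{m,x}$.

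The inclusion $\approx_d\subseteq\wsim$ needs more care, since $\wsim$ looks only at forward steps with keys erased. We show $\approx_d$ is a weak bisimulation. A forward step $P\xrightarrow{\alpha}P'$ comes from some $P\arrof{\alpha[k]}P'$, and the directional clause with $x=f$ yields only forward $\tau$ steps around a forward $\mu$ step. Forgetting keys gives $Q\Rightarrow\xrightarrow{\alpha}\Rightarrow Q'$, or $Q\Rightarrow Q'$ when $\alpha=\tau$.

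Strictness:
\begin{itemize}
\item $\tau.a\approx_d a$ but $\tau.a\not\frsim a$, since $a$ has no $\tau$ step.
\item $\tau+a\approx_m a$ but $\tau+a\not\approx_d a$ (Examples~\ref{ex:mbisp}, \ref{ex:dbisp}).
\item For $\approx_d\subsetneq\wsim$, take $a[m]\wsim b[n]$ from Example~\ref{ex:wbisp}. These are not directionally bisimilar, since backward steps with labels $a[m]$ versus $b[n]$ cannot match.
\end{itemize}

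\emph{Main obstacle.} The delicate part is certifying the negative facts that are asserted rather than proved in the examples, notably $\tau+a\not\approx_d a$ and $a\parop a\not\frsim a.a$. For $\tau+a\not\approx_d a$, I would argue as follows. After $\tau+a\arrof{\tau[k]}\tau[k]+a$, the process $a$ must answer with $\Rightarrow_f$, i.e.\ stay as $a$. But $a\arrof{a[n]}$ while $\tau[k]+a$ can do no forward $a$, because the choice is already committed. Each such claim is a short finite check of the transition systems, exhibiting a distinguishing move that the candidate relation cannot close. I expect that, together with rechecking the examples taken from earlier, to be the bulk of the work.
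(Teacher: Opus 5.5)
Your proof is correct. The non-strict inclusions are exactly what the paper means by ``follow directly from the definitions'': you spell out that each finer largest bisimulation satisfies the clauses of the coarser one, and those checks are all sound.

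The difference is in strictness. The paper defers it to Proposition~\ref{prop:nonemptystd}, which exhibits a pair of \emph{standard} processes in each of the eleven areas of Fig.~\ref{fig:hierarchy}. Strictness is thus a by-product of a stronger result, which also settles every non-inclusion between incomparable relations (e.g.\ $\approx_m$ versus $\wsim$, or $\swsim$ versus $\approx_d$). It also shows that all separations already hold on CCS processes.

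You instead choose one small separating pair per inclusion, mostly reused from the earlier examples, and verify the negative side by hand. Your argument for $\tau+a\not\approx_d a$ is the right kind of check. This route is more elementary and self-contained, but it proves only the six strict inclusions.

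Note that your witness $(a[m],b[n])$ for $\approx_d\subsetneq\wsim$ is non-standard. It suffices for the statement as given, but it would not support the paper's claim that the hierarchy is strict already on standard processes. A standard alternative is $(a\parop b,\ a.b+b.a)$:
\begin{itemize}
\item it lies in $\sim\subseteq\wsim$ by the expansion law;
\item it lies outside $\approx_m\supseteq\approx_d$: after the forward steps $a[n]$ and $b[m]$, the left side can undo $a[n]$, while the right side, which has no $\tau$ steps and whose responses are forced, cannot.
\end{itemize}
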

\begin{proof}
  Inclusions follow directly from the definitions.  The proof of their
  strictness will actually be deferred to
  Proposition~\ref{prop:nonemptystd}, providing witnesses for each of
  them.\qed
\end{proof}
\begin{figure}[t]
\begin{tikzpicture}[scale=0.8, transform shape]

\tikzstyle{ensemble} = [draw, ellipse, minimum width=6cm, minimum height=6cm, align=center]

\node[ensemble, minimum width=4cm, minimum height=3cm, label={[font=\LARGE, label distance=-23pt]85:$\frsim$}] (ks) at (0,-2) {};

\node[ensemble, minimum width=8cm, minimum height=9cm, label={[font=\LARGE, label distance=-22pt]70:$\sim$}] (s) at (0,0) {};

\node[ensemble, minimum width=11cm, minimum height=11cm, label={[font=\LARGE, label distance=-24pt]65:$\swsim$}] (sw) at (0,0) {};

\node[ensemble, minimum width=14cm, minimum height=15cm, label={[font=\LARGE, label distance=-22pt]67:$\wsim$}] (w) at (0,-1) {};

\node[ensemble, minimum width=11cm, minimum height=8cm, postaction={pattern=north east spaced lines}, 
label={[font=\LARGE, label distance=-24pt]300:$\approx_d$}] (kwd) at (0,-3) {};

\node[ensemble, minimum width=16cm, minimum height=13cm, label={[font=\LARGE, label distance=-26pt]300:$\approx_m$},] (kwm) at (0,-3.5) {};
\node at (0,-2) {\LARGE1};
\node at (0,0.3) {\LARGE2};
\node at (0,2) {\LARGE3};
\node at (0,3.75) {\LARGE4};
\node at (0,5) {\LARGE5};
\node at (0,6) {\LARGE6};
\node at (0,-5) {\LARGE7};
\node at (0,-6.25) {\LARGE8};
\node at (0,-7.75) {\LARGE9};
\node at (0,-9.25) {\LARGE10};
\node at (4.7,0.25) {\LARGE11};
\node at (-4.7,0.25) {\LARGE11};
  
\end{tikzpicture}
\caption{Hierarchies of bisimilarities}\label{fig:hierarchy}
\end{figure}

The first hierarchy relates strong forward-reverse bisimilarity to CCS bisimilarities.
The second hierarchy instead focuses on reversible bisimilarities. Note that $\approx_d$ is included in both $\approx_m$ and $\wsim$ (hence in their intersection).
The two hierarchies are graphically represented in Fig.~\ref{fig:hierarchy}.
The relations are written just inside the elliptical set they represent. 
The lines on the set for $\approx_d$ are just a visual help to distinguish the corresponding ellipse.

We now show that there are no other inclusions beyond the ones in
Proposition~\ref{prop:hierarchies}, and that all the inclusions there
are actually strict. Graphically, it means that all the areas in
Figure~\ref{fig:hierarchy} are not empty. We show this by providing
examples of pairs of processes in each of them.  Notably, all the
examples are made of standard processes, hence none of these notions
collapse when restricting the attention to standard processes. In
other words, all these notions induce different equivalence relations
on CCS processes.

\begin{proposition}[Hierarchies are Strict on Standard Processes]\label{prop:nonemptystd}
  All the areas in Fig.~\ref{fig:hierarchy} are not empty, and each of
  them contains at least a pair of standard processes.
\end{proposition}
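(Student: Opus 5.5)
The plan is to read off from Fig.~\ref{fig:hierarchy} the exact membership pattern of each numbered area with respect to the six relations $\frsim, \sim, \swsim, \wsim, \approx_d, \approx_m$. By Proposition~\ref{prop:hierarchies}, a pattern is admissible iff it respects $\frsim\subseteq\sim\subseteq\swsim\subseteq\wsim$, $\frsim\subseteq\approx_d$ and $\approx_d\subseteq\approx_m\cap\wsim$. For each area I will then exhibit a pair of standard processes with that pattern. This yields the eleven regions: the core $\frsim$; the areas inside $\sim$, $\swsim$, $\wsim$ split according to membership in $\approx_d$ and $\approx_m$; the areas of $\approx_d$ and $\approx_m$ outside $\wsim$; and the area outside everything. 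Since the hierarchy inclusions are already known, for each pair it suffices to establish only the \emph{tightest} positive memberships and the \emph{loosest} negative ones. Concretely, if a pair is in $\sim$ it is also in $\swsim$ and $\wsim$; if it is not in $\approx_m$ it is not in $\approx_d$ or $\frsim$.

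Candidate witnesses I would try first, built from the examples of Section~\ref{sec:bisim}, are as follows.
\begin{itemize}
\item $a+a$ vs.\ $a$ for $\frsim$ (Ex.~\ref{ex:frbisp}).
\item $a\parop a$ vs.\ $a.a$ for the area of $\sim$ outside $\approx_m$. With no $\tau$'s, $\approx_m$ and $\approx_d$ collapse to $\frsim$, and the reverse moves distinguish the two processes.
\item $\tau\parop\tau$ vs.\ $\tau.\tau$ for $\sim\cap\approx_d$ minus $\frsim$. Here a backward $\tau$ of the parallel process undone in the "wrong" order can be matched by zero reverse $\tau$'s.
\item $a\parop\tau$ vs.\ $a.\tau+\tau.a$ for $\sim\cap\approx_m$ minus $\approx_d$. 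Both should be $\approx_m a$ by Ex.~\ref{ex:mbisp}, while after $a$ only the parallel process can still undo $a$ directly.
\item $a+\tau.a$ vs.\ $\tau.a$ (Ex.~\ref{ex:congbisp}) and variants prefixed or summed with visible actions, for the areas of $\swsim\setminus\sim$.
\item $\tau.a$ vs.\ $a$ for $\approx_d\cap\wsim$ outside $\swsim$.
\item $a+\tau$ style examples, e.g.\ $\tau+a$ vs.\ $a$, for $\approx_m$ minus $\wsim$. This works because $\tau+a\xrightarrow{\tau}\tau[n]$ is unmatched forward-only, but in the mixed setting it can be undone.
\item $a\parop a$ vs.\ $a.a$ with an added $\tau$ layer, e.g.\ $\tau.(a\parop a)$ vs.\ $a.a$, to populate $\wsim$ outside $\swsim$ and $\approx_m$.
\item $a$ vs.\ $b$ for the outermost area.
\end{itemize}
Where a region needs both a positive weak-reversible property and a negative CCS property (or vice versa), I will combine basic witnesses with choice or prefixing. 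I will check that such combinations do not accidentally enlarge the equivalence.

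For each positive claim I will give an explicit relation: the finite set of reachable pairs, closed under symmetry. I will then check the clauses of Definitions~\ref{def:sbis}--\ref{def:dbis}; these are all finite-state checks. For each negative claim I will exhibit a distinguishing move, typically a forward $\tau$ that must be matched by at least one $\tau$ (for $\swsim$), or a backward transition whose key ordering is causally forbidden on the other side (for $\frsim$, $\approx_d$). I will argue by exhausting the finitely many responses.

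The main obstacle is the areas sitting in $\approx_m$ or $\approx_d$ but outside a CCS relation, or vice versa. Mixed $\tau$ reachability $\Rightarrow_m$ is very permissive, since it allows undoing and redoing $\tau$'s. A naive candidate can therefore turn out to be in $\approx_m$ when I wanted it outside, or outside $\approx_d$ for subtle reverse reasons. For these I would first fix the negative property with a purely forward or purely backward distinguishing move. Only then would I tune the positive property by adding $\tau$ summands, which $\approx_m$ ignores by Ex.~\ref{ex:mbisp}.
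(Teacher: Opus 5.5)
Your overall strategy (one pair of standard processes per area, using the inclusions of Proposition~\ref{prop:hierarchies} to cut down the checks) is the paper's. Your pairs for areas 1, 4, 6, 7, 8 and 10 work: $a+a$/$a$, $a\parop a$/$a.a$, $\tau.(a\parop a)$/$a.a$, $\tau.a$/$a$, $\tau+a$/$a$, and $a+\tau.a$/$\tau.a$, which lands in area 7. However, two of your witnesses land in the wrong area.

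First, $\tau\parop\tau$ and $\tau.\tau$ are FR-bisimilar, so they lie in area 1, not area 2. Keys on $\tau$ prefixes are bound, and the top-level structural congruence lets you $\alpha$-convert them. Hence $\tau[n].\tau[m]\equiv\tau[m].\tau[n]$, and by (BK-EQUIV) $\tau[n].\tau[m]\arrob{\tau[n]}\tau[m].\tau$. So every $\tau$ label is available whenever some $\tau$ move in that direction is, and relating states with the same number of executed prefixes is an FR-bisimulation. The order in which the two $\tau$s are undone is not observable. The paper instead separates $\frsim$ using visible actions, whose keys are free: it hides the expansion-law pair $a\parop b$ / $a.b+b.a$ behind $\tau$ chains of different lengths.

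Second, $a\parop\tau$ and $a.\tau+\tau.a$ \emph{are} directionally bisimilar. In $\approx_d$ a backward $a[k]$ may be matched by $\Rightarrow_r\arrob{a[k]}\Rightarrow_r$, so $a[k].\tau[n]+\tau.a$ answers the undoing of $a[k]$ by first undoing $\tau[n]$. Explicitly, the following relation is a directional bisimulation:
\begin{itemize}
\item $a\parop\tau$ and $a\parop\tau[n]$ related to both $a.\tau+\tau.a$ and $a.\tau+\tau[n].a$;
\item $a[k]\parop\tau$ related to $a[k].\tau+\tau.a$ and $a.\tau+\tau[n].a[k]$;
\item $a[k]\parop\tau[n]$ related to $a[k].\tau+\tau.a$, $a[k].\tau[n]+\tau.a$ and $a.\tau+\tau[n].a[k]$.
\end{itemize}
This pair is outside $\frsim$: after $a[k]$ and then $\tau$, the right-hand side is forced into $a[k].\tau[n]+\tau.a$, which cannot undo $a[k]$. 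So it is actually a valid witness for area 2, but area 3 remains uncovered.

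You also give no concrete pair for areas 5 and 11, only unspecified ``variants'' of $a+\tau.a$/$\tau.a$. You give none at all for area 9, $(\wsim\cap\approx_m)\setminus(\swsim\cup\approx_d)$. Areas 3, 9 and 11 are exactly the delicate ones, and what is missing is the idea that separates $\approx_m$ from $\approx_d$.

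\begin{itemize}
\item $\tau$s placed merely in sequence or in parallel, as in your candidate, do not help. $\approx_d$ already lets same-direction $\tau$s surround the matched action.
\item What does work is a forward $\tau$ that \emph{resolves a choice}. Afterwards, forward-only $\tau$ moves can no longer reach a branch that mixed $\tau$ reachability recovers by undoing that $\tau$.
\item The paper uses the $c.d+d.c$ versus $c\parop d$ construction for area 3, adds $+\tau$ versus $+\tau.\tau$ for area 9, and uses $\tau$ summands for area 11.
\end{itemize}

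Nor can you obtain these modularly by ``combining basic witnesses with choice or prefixing''. $\wsim$ and $\approx_d$ are not preserved by choice even on standard processes: for example, $\tau.a\approx_d a$ but $\tau.a+b\not\approx_d a+b$. So every combined pair has to be checked from scratch, and those checks are the actual content of the proposition.

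A minor point: since $\approx_d\subseteq\wsim$, there is no area of $\approx_d$ outside $\wsim$. The complement, your pair $a$/$b$, is not one of the numbered areas.
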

\begin{proof}
  The ones below are witnesses for every area. 
  \begin{enumerate}
    \item $\frsim\ :\ (a+a,a)$. The two $a$s are indistinguishable.
    \item $(\sim\cap\approx_d)\backslash\frsim\ :\ (\tau.\tau.(a.b+b.a)+\tau.(a\parop b),\ \tau.\tau.(a\parop b)+\tau.(a.b+b.a))$. We have that $a.b+b.a \sim a\parop b$ is an instance of the expansion law, but the same equivalence is not valid for $\frsim$. Under $\approx_d$ one can use equal subterms to match the challenge since they can be reached by taking a different number of $\tau$ steps.
    \item\label{item:notd} $(\sim\cap\approx_m)\backslash\approx_d\ :\ (\tau.((a.b+b.a)+(c\parop d))+\tau.((a\parop b)+\tau.(c.d+d.c)),\ \tau.((a\parop b)+\tau.(c\parop d))+\tau.((a.b+b.a)+(c.d+d.c))$. The equivalence holds under $\sim$ thanks to the expansion law. It holds also under $\approx_m$ since the choice of which $\tau$ to execute can always be undone to select the desired branch. This is not the case under $\approx_d$ where on the left one can reach a state where the only forward actions enabled are from $c.d+d.c$, while on the right no such state exists (if $c.d+d.c$ is forward enabled, then also $a.b+b.a$ is enabled).  
    \item $\sim\backslash\approx_m\ :\ (a\parop b,\ a.b+b.a)$. We use again an instance of the expansion law.
    \item $\swsim\backslash(\sim\cup\approx_m)\ :\ (b\parop (a+a.\tau),\ a.(\tau\parop b)+a.b+b.a.\tau)$. This fails under $\approx_m$, since the right hand side has no $a \parop b$ to match the left hand side one. This fails under $\sim$ since on the right if one starts from $b$, there is no way to avoid a $\tau$, while this can be avoided on the left. This is not an issue under $\swsim$ since the $a$ without $\tau$ can be matched by executing both $a$ and $\tau$. 
    \item $\wsim\backslash(\swsim\cup\approx_m)\ :\ (a.\tau\parop b,\ a.b+b.a)$. This holds under $\wsim$ thanks to the expansion law and since the $\tau$ can be abstracted away. Instead, the expansion law fails under $\approx_m$ and the $\tau$ needs to be matched by another $\tau$ under $\swsim$. 
    \item $(\swsim\cap\approx_d)\backslash\sim\ :\ (a+a.\tau,\ a.\tau)$. This fails under $\sim$ since on the left executing an $a$ leads to a state where no $\tau$ can be performed. This is not an issue for $\approx_d$ where the $\tau$ can be matched by staying idle. For $\swsim$, left $a$ can be matched by executing both $a$ and $\tau$. 
    \item $\approx_d\backslash\swsim\ :\ (\tau,\ 0)$. This fails under $\swsim$ since the $\tau$ cannot be matched, instead under $\approx_d$ the $\tau$ can be matched by staying idle. 
    \item $(\wsim\cap\approx_m)\backslash(\swsim\cup\approx_d)\ :\newline (\tau.((a.b+b.a)+(c\parop d))+\tau.((a\parop b)+\tau.(c.d+d.c))+\tau,\ \tau.((a\parop b)+\tau.(c\parop d))+\tau.((a.b+b.a)+(c.d+d.c))+\tau.\tau$. This holds under $\wsim$ thanks to the expansion law, and since $\tau$ and $\tau.\tau$ are weakly bisimilar. However, the latter are not semi-weakly bisimilar, hence $\swsim$ fails. Also, this holds under $\approx_m$, since it abstracts away from $\tau$ actions. $\approx_d$ fails as well, for the same reason as in item~\ref{item:notd}. 
    \item $\approx_m\backslash\wsim\ :\ (a,\ a+\tau)$. This is well-known not to hold under $\wsim$. Instead under $\approx_m$ the $\tau$ step can be mimicked by staying idle since the $a$ action remains enabled also after the $\tau$, since the $\tau$ can be undone to do $a$.
    \item $(\swsim\cap\approx_m)\backslash(\sim\cup\approx_d)\ :\ (\tau.(a.b+b.a+\tau+\tau.\tau)+(a\parop b),\tau.(a.b+b.a+(a\parop b)+\tau.\tau))$. This fails under $\sim$ since $\tau.\tau$ is not matched on the right, since afterwards a third $\tau$ would be enabled. This fails under $\approx_d$ since after the first $\tau$ we can still go to $(a\parop b)$ on the right, but not on the left.
      This succeeds under $\approx_m$ since $\tau$s are abstracted away, and terms without $\tau$s are identical. This succeeds under $\swsim$ since $\tau$s can always be matched, and thanks to the expansion law.   \qed
\end{enumerate}
\end{proof}
If instead of considering only pairs of standard processes we consider only
pairs of non-standard processes, then all the areas remain non-empty.
\begin{proposition}
  Each of the areas in Fig.~\ref{fig:hierarchy} contains at least a pair of non-standard processes.
\end{proposition}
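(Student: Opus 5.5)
The plan is to reuse the eleven standard witnesses of Proposition~\ref{prop:nonemptystd} and transform each pair uniformly into a pair of non-standard processes that lies in the same area. The simplest transformation is to prefix both processes with an already executed action: given a witness pair $(P,Q)$, consider $(c[k].P,\ c[k].Q)$ with $c$ a fresh name not occurring in $P$ or $Q$ and $k$ a fresh key. Both processes are non-standard. An alternative, in case prefixing causes trouble, is to put both processes in parallel with a fresh executed component, $(P \parop c[k],\ Q \parop c[k])$.

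For the transformation $T(P)=c[k].P$, two lemmas are needed.

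\emph{Preservation.} For each of the six bisimilarities $\mathcal{B}$, if $P\,\mathcal{B}\,Q$ then $T(P)\,\mathcal{B}\,T(Q)$. The witness relation is $\{(T(P'),T(Q')) : P'\,\mathcal{B}\,Q'\}$ together with the two extra pairs $(c.P,\,c.Q)$ and $(T(P),T(Q))$ when $P,Q$ are standard. By rules (PREFIX)/(BK-PREFIX), the transitions of $c[k].P'$ other than undoing $c[k]$ are exactly those of $P'$, wrapped in the prefix. Undoing $c[k]$ is possible only when $P'$ is standard, and it leads to $c.P'$. For the forward-only relations this last move is irrelevant, since forward transitions of $c[k].P'$ are exactly those of $P'$. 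For the reversible relations we must check that $c[k]$ is undone on one side only when it can also be undone on the other. In the witness pairs $P$ and $Q$ are standard, so from $(T(P),T(Q))$ both sides can undo $c[k]$. After moves inside, however, only the related derivatives are matched.

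\emph{Reflection.} If $T(P)\,\mathcal{B}\,T(Q)$ then $P\,\mathcal{B}\,Q$, which shows that failures are preserved. Here the relation $\{(P',Q') : T(P')\,\mathcal{B}\,T(Q')\}$ is a $\mathcal{B}$-bisimulation, because every transition of $P'$ lifts to $T(P')$. The matching transitions of $T(Q')$ never touch $c[k]$: they are labelled differently from $\tau$ and from any action of $P'$, since $c$ is fresh, and a $\tau$ step cannot undo $c[k]$. So every move projects back.

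The main obstacle is the standardness mismatch in the reversible cases (mixed and directional). Suppose a derivative $P'$ is standard while the related $Q'$ is not, for instance after weak $\tau$ matching under $\approx_m$, where $Q'$ may have executed a $\tau$. Then $T(P')$ can undo $c[k]$ but $T(Q')$ cannot, so preservation could fail. I would handle this in one of two ways. The first option is to prove a small lemma that these relations, restricted to reachable processes, relate standard processes only to processes that can reach a standard state through matching steps. This is plausible because the initial state is reachable backwards, and under $\approx_m$, $\tau$ steps can be reverted freely. The second option, which I would try first, is to use the parallel transformation $P\parop c[k]$ instead, where undoing $c[k]$ never depends on $P'$ and is always matched by the same move on the other side. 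For that transformation, preservation reduces to the fact that each relation is closed under parallel composition with the fixed inert component $c[k]$, since $c$ is fresh and no synchronisation is possible. Reflection works as before, by projection.
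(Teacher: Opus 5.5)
Your proposal is correct, and it follows a genuinely different route from the paper. The paper just prefixes both witnesses with an executed $\tau[n]$ and gives no further argument. You instead wrap them in a fresh \emph{visible} executed action $c[k]$. You also prove $T(P)\,\mathcal{B}\,T(Q)\iff P\,\mathcal{B}\,Q$ for all six relations, which is exactly what keeping each pair in its area requires: both the inclusions and the exclusions.

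The visible label is what makes your reflection step a clean projection. Neither a $\tau$ step nor a step with the challenge's label can undo $c[k]$. With $\tau[n]$, a defender's $\Rightarrow_r$ or $\Rightarrow_m$ answer may undo $\tau[n]$ once the body is standard. This yields pairs like $(\tau[n].P'',\tau.Q'')$, and projection then needs an extra step to re-absorb that $\tau$.

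The standardness obstacle you flag is real.
\begin{itemize}
\item For the prefix variant under $\approx_m$, it is exactly Proposition~\ref{prop:almoststd}. Since transitions preserve $\tostd{\cdot}$, the standard state reached is $\tostd{Q'}$, so your two extra pairs suffice.
\item For $\approx_d$ you need a separate argument, because ``$\tau$ steps can be reverted freely'' only covers the mixed case. A standard $P'$ can answer a backward $\tau$ only by idling, and cannot answer a backward $\mu$ at all. Hence $Q'\Rightarrow_r\tostd{Q'}$ through states all $\approx_d$-related to $P'$.
\item The paper's one-line $\tau[n]$ argument silently relies on the same observation for $\approx_d$ after the idle answer to undoing $\tau[n]$.
\item Your parallel variant avoids the issue altogether. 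Closure under an inert fresh parallel component is routine for all six relations, and for $\approx_m$ it is an instance of Theorem~\ref{th:mixediscongr}.
\end{itemize}

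What the paper's choice buys is economy and reuse. For $\approx_m$, both directions follow at once from (TAU-PREF-K) and transitivity. The same trick is also reused one-sidedly in the next proposition, which works only because undoing $\tau[n]$ is invisible. A one-sided $c[k]$ would break every reversible weak equivalence.

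Three small points:
\begin{itemize}
\item The witnesses of areas 3 and 9 already use $c$ and $d$, so pick another fresh name.
\item Moves of $P'$ that use key $k$ do not lift through $c[k]$, so the projection needs invariance of the relations under key renaming.
\item Your preservation relation should range over all keys $k'$, since undoing and redoing $c$ creates a new key.
\end{itemize}
The paper's $\tau[n]$ faces the same technicalities.
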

\begin{proof}
   One can take the witnesses from the proof of Proposition~\ref{prop:nonemptystd} add a $\tau[n]$ prefix in front of both processes to obtain witnesses made of non-standard processes. \qed
\end{proof}
Finally, if we consider pairs made of a non-standard process and a standard one, then $\frsim$ becomes empty. The other areas remain non-empty.
\begin{proposition}
  Each of the areas in Fig.~\ref{fig:hierarchy} contains at least a pair made of a non-standard process and a standard one, but for the area 1, corresponding to $\frsim$.
\end{proposition}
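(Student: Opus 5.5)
The plan has two parts: show that area 1 contains no mixed pair, and then transfer the standard witnesses of Proposition~\ref{prop:nonemptystd} to the other ten areas.

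\emph{Area 1 is empty.} Let $P$ be reachable and non-standard, and $Q$ standard. Since $P$ is reachable, there is a path from some standard process to $P$. This path is nonempty, because $P$ has keys. By the loop property of CCSK~\cite{phillips2007reversing}, every transition can be undone. Hence the reverse of the last step of the path is a backward transition $P \arro{\vartheta}{r} P''$. On the other hand, a standard $Q$ has no backward transitions, since no rule in Fig.~\ref{ReverseCCSK} applies without keys. So the first challenge of Definition~\ref{def:frbis} cannot be answered, and $P \not\frsim Q$.

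\emph{The other areas.} For each pair $(P,Q)$ of standard processes given in the proof of Proposition~\ref{prop:nonemptystd} for areas 2--11, I take the pair $(\tau[n].P,\ Q)$. The goal is to show that this new pair belongs to exactly the same relations among $\sim,\swsim,\wsim,\approx_m,\approx_d$ as $(P,Q)$ does. Membership in $\frsim$ is irrelevant here, because $(P,Q)\notin\frsim$ for all these areas. By transitivity of each equivalence, it suffices to prove $\tau[n].P \mathrel{\mathcal{E}} P$ for every such $\mathcal{E}$ and every standard $P$. I take as known that each largest bisimulation is an equivalence; for the weak ones this is the standard composition-of-weak-bisimulations argument.

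\emph{Establishing $\tau[n].P \mathrel{\mathcal{E}} P$.} For the forward-only relations $\sim$, $\swsim$ and $\wsim$, I use the relation $\{(\tau[n].R,R)\}\cup\{(R,\tau[n].R)\}\cup\mathrm{Id}$, closed under the derivatives reached by forward steps. Forward transitions of $\tau[n].R$ are exactly those of $R$, by rule (PREFIX) together with $\alpha$-conversion of keys via (EQUIV). Labels carry no keys in the CCS semantics, so this relation is a strong bisimulation, and hence also a semi-weak and a weak one. For $\approx_d$, and therefore for $\approx_m$ by Proposition~\ref{prop:hierarchies}, I use
\[
\mathcal{R}=\{(\tau[n].R,R),(\tau.R,R) : R \text{ reachable}\}\cup\mathrm{Id},
\]
together with its symmetric closure. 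I expect this step to be the main obstacle, because of how derivatives of $R$ interact with the added prefix. The following cases cover it.
\begin{itemize}
\item A backward step $\tau[n].R\arro{\tau}{r}\tau.R$ is matched by $R$ idling, which is allowed since $\Rightarrow_r$ is reflexive. It lands in the pair $(\tau.R,R)$.
\item A forward step $\tau.R\arro{\tau}{f}\tau[n].R$ is matched by idling in the same way.
\item Any other step of $\tau[n].R$ is a step of $R$ under the prefix, by (PREFIX) or (BK-PREFIX). It is matched by the same step of $R$ and leads to a pair of the same shape.
\item Conversely, each step of $R$ is mirrored by $\tau[n].R$. The exception is when the prefix is $\tau.R$ and $R$ moves: then $\tau.R$ first performs its forward $\tau$ and then the step. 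This is legitimate under $\approx_d$ only when the step is forward. If $R$ moves backward, then $R$ is non-standard, but in $\tau.R$ the continuation of an unexecuted prefix must be standard. So pairs $(\tau.R,R)$ only ever arise with $R$ standard, and such an $R$ has no backward moves.
\end{itemize}
Checking that this invariant (the continuation $R$ is standard whenever the prefix is unexecuted) is preserved across all clauses is the delicate part of the argument.

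With these lemmas in place, each pair $(\tau[n].P,Q)$ lies in precisely the area of $(P,Q)$. Every such pair consists of a non-standard process and a standard one, which completes the proof.
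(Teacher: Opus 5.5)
Your proposal is correct and follows the same route as the paper's proof. Area 1 is empty because the non-standard process can move backward and the standard one cannot, and the remaining witnesses are obtained by putting a $\tau[n]$ prefix in front of one component. Your explicit lemma $\tau[n].P \mathrel{\mathcal{E}} P$ for each $\mathcal{E}$, combined with transitivity, spells out why both membership and non-membership in each relation are preserved, which the paper leaves implicit.
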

\begin{proof}
   The area for $\frsim$ is empty since $\frsim$ can always distinguish a non-standard process, that can make a backward move, from a standard one, that cannot.
   For the others, one can take the witnesses from the proof of Proposition~\ref{prop:nonemptystd} add a $\tau[n]$ prefix in front of only one of the two components. This preserves all the CCS bisimilarities, which cannot observe the history, as well as the weak CCSK bisimilarities, where the backward $\tau[n]$ can be matched by the other process by staying idle.  \qed
\end{proof}


\section{Congruence Properties of Bisimilarities}\label{sec:properties}
We first discuss whether the considered equivalences are congruences
or not, namely in the case of strong CCS bisimilarity whether $P \sim
Q \implies C[P]\sim C[Q]$. Note that this implication makes sense only
if all the involved processes are well-formed, hence we only consider
this case.

Strong bisimilarity is a congruence in
CCS~\cite{milner1989communication}. Somehow surprisingly, its
extension to CCSK is not a congruence, as shown in the counterexample
below.

\begin{example}[$\sim$ is not a congruence]\label{ex:counter}
\[\tau[n]\sim \nil \centernot\implies a+\tau[n]\sim a+\nil\]
Processes on the left are strongly bisimilar since $\sim$ abstracts away from the history.
Processes on the right are not strongly bisimilar since $a+\tau[n]$ cannot execute any forward move, while $a+\nil$ can execute $a$.
\end{example}
The key point here is that forward equivalences abstract away from
the history, but adding a choice where the added branch is a non-standard process
disables the other branch (which needs to be standard to ensure
well-formedness). Thus, the same issue also occurs for the other
forward equivalences we consider, namely weak ($\wsim$) and semi-weak
($\swsim$) bisimilarities, as stated below.

\begin{proposition}[Forward equivalences are not congruences in CCSK]\label{prop:fnocongr}
None of $\sim$, $\wsim$ and $\swsim$ are congruences on CCSK terms.
\end{proposition}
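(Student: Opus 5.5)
The natural approach is to reuse the single counterexample of Example~\ref{ex:counter} for all three equivalences. The pair $\tau[n]$ and $\nil$ is related by each of $\sim$, $\swsim$ and $\wsim$. The pair $a+\tau[n]$ and $a+\nil$ is related by none of them. Since both pairs are well-formed (in fact reachable: $a+\tau[n]$ comes from $a+\tau$ by a forward $\tau$ step), this suffices.

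The first step shows $\tau[n] \sim \nil$. Neither process has a forward transition: (TOP) requires an unexecuted prefix, and (PREFIX) requires the continuation $\nil$ to move. Hence the symmetric relation $\{(\tau[n],\nil),(\nil,\tau[n])\}$ satisfies Definition~\ref{def:sbis} vacuously. By Proposition~\ref{prop:hierarchies}, $\sim \subset \swsim \subset \wsim$, so $\tau[n] \swsim \nil$ and $\tau[n] \wsim \nil$ follow immediately.

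The second step shows that $a+\tau[n]$ and $a+\nil$ are not related by $\wsim$. Since $\wsim$ is the largest of the three, this gives non-equivalence for all of them. The right process satisfies $a+\nil \xrightarrow{a} a[m]+\nil$ by (TOP) and (CHOICE), as $\nil$ is standard. The left process has no forward transition at all. The branch $a$ cannot fire, because (CHOICE) requires the other branch $\tau[n]$ to be standard, and it is not. The branch $\tau[n]$ has no forward transition, as observed above. The left process is thus stuck for forward moves, so in particular it cannot perform $\Rightarrow\xrightarrow{a}\Rightarrow$, since $\Rightarrow$ only allows the empty sequence. Any weak bisimulation containing the pair would violate Definition~\ref{def:wbis}.

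The only point needing care is well-formedness of the contexts and processes, as the paper restricts congruence to well-formed terms. Here the context is $a+\bullet$, which is standard, and both filled terms are reachable, so there is no real obstacle. The proof then concludes that $P \mathrel{R} Q$ does not imply $C[P] \mathrel{R} C[Q]$ for $R \in \{\sim,\swsim,\wsim\}$.
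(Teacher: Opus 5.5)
Your proof is correct and takes the same route as the paper, which proves the proposition by simply citing Example~\ref{ex:counter} ($\tau[n]\sim\nil$ but $a+\tau[n]\not\sim a+\nil$) for all three equivalences. You fill in the details the paper leaves implicit: the vacuous bisimulation for the first pair, the inclusions $\sim\subseteq\swsim\subseteq\wsim$ to transfer both facts, the side condition $\std{\cdot}$ of (CHOICE) that blocks the $a$ branch, and the reachability of the terms involved.
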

\begin{proof}
Counterexample~\ref{ex:counter} proves the thesis for all the equivalences.\qed
\end{proof}
Note that for $\wsim$ the usual problem of CCS that $\tau.a\approx a \centernot\implies \tau.a+b\approx a+b$ remains. However, $\swsim$ is a congruence in CCS \cite{MontanariS91}, but not in CCSK for the reason above.

Concerning reversible equivalences, $\frsim$ has been proved to be a
congruence in~\cite[Proposition 4.9]{lanese2021forward}. Instead, for
weak directional bisimilarity a problem similar to the one above occurs.

\begin{proposition}[Weak directional bisimilarity is not a congruence]
$\approx_d$ is not a congruence on CCSK terms.
\end{proposition}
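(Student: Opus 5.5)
The plan is to reuse the idea of Example~\ref{ex:counter}. A weak reversible equivalence that lets a backward $\tau$ be matched by staying idle will equate a process holding some history with one holding none. Placing the non-standard one inside a choice then disables the other branch for forward moves. Under $\approx_d$, forward actions may only be matched using forward $\tau$ steps, so this disabling cannot be undone during the matching. Concretely, I would prove
\[\tau[n]\approx_d \nil \quad\text{but}\quad a+\tau[n]\not\approx_d a+\nil .\]
Both sides of each pair are well-formed and reachable: $\tau[n]$ is reached from $\tau$, and $a+\tau[n]$ from $a+\tau$ by (TOP) and (CHOICE).

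\textbf{Step 1.} To show $\tau[n]\approx_d\nil$, I would exhibit the symmetric closure of $\{(\tau[n],\nil),(\tau,\nil)\}$ and check that it is a directional bisimulation. The process $\nil$ has no transitions. The only transition of $\tau[n]$ is $\tau[n]\arro{\tau[n]}{r}\tau$. Condition~\ref{cond:dtau} lets $\nil$ answer with $\nil\Rightarrow_r\nil$ (zero steps), and $(\tau,\nil)$ is in the relation. Symmetrically, the only transition of $\tau$ is $\tau\arro{\tau[m]}{f}\tau[m]$, answered by $\nil\Rightarrow_f\nil$. Up to $\alpha$-conversion of the bound key $m$, which is allowed by the top-level structural congruence, $(\tau[m],\nil)$ is in the relation. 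This is consistent with the earlier witness $(\tau,\nil)$ for area 8 in Proposition~\ref{prop:nonemptystd}.

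\textbf{Step 2.} To show $a+\tau[n]\not\approx_d a+\nil$, take the challenge $a+\nil\arro{a[k]}{f}a[k]+\nil$, obtained by (TOP) and (CHOICE) since $\std{\nil}$. By condition~\ref{cond:dmu}, $a+\tau[n]$ would need a derivation
\[a+\tau[n]\Rightarrow_f Q\arro{a[k]}{f}Q'.\]
I would argue that $a+\tau[n]$ has no forward transition at all. The left branch cannot move by (CHOICE), because the right branch $\tau[n]$ is not standard. The right branch can only move forward via (PREFIX), which requires $\nil$ to move, and it cannot. (EQUIV) only renames bound keys and so adds nothing. Hence $Q=a+\tau[n]$, and no forward $a$ is available, so the challenge cannot be matched.

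The only mildly delicate point is this exhaustive check in Step~2 that no forward rule applies, including (EQUIV). It is a direct rule inspection. The backward move $a+\tau[n]\arro{\tau[n]}{r}a+\tau$, which would re-enable $a$, is exactly what $\approx_d$ forbids when matching a forward action, in contrast with $\approx_m$.
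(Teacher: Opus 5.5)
Your proposal is correct and uses exactly the paper's counterexample: $\tau[n]\approx_d\nil$, but $a+\tau[n]\not\approx_d a+\nil$. You also carry out the verification that the paper leaves implicit. One cosmetic fix in Step 1: put $(\tau[m],\nil)$ into the relation for every key $m$ directly, since $\tau$ can move forward with any fresh key and, by (EQUIV), reach any $\tau[k]$. Appealing to $\alpha$-conversion is not enough, because bisimulations here are not defined up to $\equiv$.
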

\begin{proof}
The following counterexample proves the thesis:\\
$\tau[n]\approx_d\nil \centernot\implies a+\tau[n]\approx_da+0$\qed
\end{proof}

This is not the case for weak mixed bisimilarity, which, somehow
surprisingly is a congruence. In order to clarify why this is the case, we first show a property of $\approx_m$
which rules out counterexamples as the ones above.

\begin{proposition}\label{prop:almoststd}
Assume $P \approx_m Q$ where $P$ is standard. Then $Q \Rightarrow_m
Q'$ with $Q'$ standard.
\end{proposition}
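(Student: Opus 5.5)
We prove a slightly more general statement, by induction on the number of keys in $Q$:
\begin{quote}
if $P \approx_m Q$ and $P \Rightarrow_m P_0$ for some standard $P_0$, then $Q \Rightarrow_m Q'$ with $Q'$ standard.
\end{quote}
Proposition~\ref{prop:almoststd} is the special case $P_0 = P$. The generalization is needed because, when $Q$ performs a step, $P$ answers with $\tau$ steps and we lose standardness of the matching process. Since $\Rightarrow_m$ is an equivalence relation, the extra hypothesis is preserved along any $\tau$ path: whatever $P_1$ satisfies $P \Rightarrow_m P_1$ also satisfies $P_1 \Rightarrow_m P_0$.

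\emph{First auxiliary fact: $\tau$ steps preserve the set of free keys (Definition~\ref{def:free}).} A forward $\tau[m]$ step either executes a $\tau$ prefix or, via (SYNCH), labels two complementary prefixes with the same fresh key $m$. In both cases $m$ is bound, and no other key changes status. Backward $\tau$ steps symmetrically remove a bound key. Hence any $P_1$ with $P_1 \Rightarrow_m P_0$ has exactly the free keys of $P_0$, that is, none. We also check, by induction on the derivation (through (RES), (PAR), (CHOICE), (PREFIX)), that a backward transition $\arro{\mu}{r}$ with $\mu=\alpha[k]$, $\alpha\neq\tau$, undoes a prefix $\alpha[k]$ whose key occurs nowhere else. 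The side condition of (BK-PAR) forbids $k$ elsewhere in a parallel component, and a complementary occurrence would make the rule (BK-PAR) inapplicable. So $k$ is free. Consequently no $P_1$ in the $\tau$-class of a standard process can perform a backward visible action.

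\emph{Second auxiliary fact: every reachable non-standard process has some backward transition.} This is where we expect the main obstacle, since it relies on reachability (Definition~\ref{def:reachable}) rather than on syntax alone. We would argue by induction on the structure of the process, showing that some executed prefix whose continuation is standard can be undone; reachability gives the required consistency of keys. Alternatively, we would invoke the loop lemma of CCSK~\cite{phillips2007reversing}: if $Q$ is reached by a nonempty path from a standard process, the last forward step can be undone, or the last backward step can be redone and its source analysed further. If this turns out delicate, we would instead take as the measure the length of a path from a standard process to $Q$.

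\emph{Induction step.} If $Q$ is standard we are done. Otherwise $Q \arro{\vartheta}{r} Q_1$ for some $\vartheta$, and $Q_1$ has one key fewer. Suppose $\vartheta=\mu$ is visible. By condition~\ref{cond:mmu} of Definition~\ref{def:mbis}, $P \Rightarrow_m P_1 \arro{\mu}{r} \cdots$, with $P_1$ in the $\tau$-class of $P_0$. This contradicts the first fact, so $\vartheta=\tau[m]$. By condition~\ref{cond:mtau} there is $P_1$ with $P \Rightarrow_m P_1$ and $P_1 \approx_m Q_1$, and again $P_1 \Rightarrow_m P_0$. The induction hypothesis yields $Q_1 \Rightarrow_m Q'$ with $Q'$ standard, hence $Q \Rightarrow_m Q'$.
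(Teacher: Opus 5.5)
Your proof is correct and is essentially the paper's argument. The paper takes the backward path $Q\,(\arrob{\theta})^*\,\tostd{Q}$ that every reachable process has, lets $P$ answer the initial $\tau$ steps, and notes that the first visible backward step cannot be answered because $\tau$ steps from a standard process only create bound keys; your induction on the number of keys, with the generalized hypothesis $P \Rightarrow_m P_0$, unrolls that path one step at a time and makes explicit both the invariant and the free-key argument the paper leaves implicit. The reachability fact you flag as the main obstacle is exactly what the paper takes for granted (it follows from the parabolic lemma plus the Loop Lemma), so cite it directly: your Loop-Lemma and path-length sketches do not by themselves give a backward step from $Q$ when the path reaching $Q$ ends with a backward move.
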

\begin{proof}
Assume towards a contradiction that there is no such $Q'$. By
definition, we have $Q (\arrob{\theta})^* \tostd{Q}$. At least one of the steps is not
a $\tau$, otherwise we would have proven the thesis. Let us take the first such action. Then $Q$ can perform a backward non-$\tau$ action. However, such an action cannot be matched by $P$, since $P$ is standard (and executing $\tau$ steps only enables backward $\tau$ steps, while we need to match a non-$\tau$ backward action). \qed
\end{proof}

We also show that indeed weak mixed bisimilarity completely abstracts away from $\tau$ steps.
\begin{proposition}[Weak mixed bisimilarity abstracts away from $\tau$ steps]\label{prop:mixedtau}\mbox{}\\
 $P \Rightarrow_m Q$ implies $P \approx_m Q$.
\end{proposition}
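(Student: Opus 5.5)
The natural approach is to exhibit a mixed bisimulation containing every pair related by mixed $\tau$ reachability. Concretely, I would take
\[
\rel \;=\; \{(P,Q) \mid P \Rightarrow_m Q\},
\]
restricted to reachable processes. Since $\Rightarrow_m$ is an equivalence relation (as noted when it was introduced), $\rel$ is symmetric, so it suffices to check the two clauses of Definition~\ref{def:mbis} from the left-hand side only. Once $\rel$ is shown to be a mixed bisimulation, we get $\rel \subseteq \approx_m$, which is exactly the statement.

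For clause~\ref{cond:mtau}, let $P \rel Q$ and $P \arro \tau x P'$. A single $\tau$ step in either direction is itself an instance of $\Rightarrow_m$, so $P' \Rightarrow_m P$. Composing with $P \Rightarrow_m Q$ by transitivity gives $P' \Rightarrow_m Q$. We therefore let $Q$ stay idle, taking $Q' = Q$ and using the reflexive instance $Q \Rightarrow_m Q$; the pair $(P',Q)$ lies in $\rel$.

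For clause~\ref{cond:mmu}, let $P \rel Q$ and $P \arro \mu x P'$. By symmetry of $\Rightarrow_m$ we have $Q \Rightarrow_m P$, hence $Q \Rightarrow_m P \arro \mu x P' \Rightarrow_m P'$, the last part being the reflexive case. This is precisely $Q \Arro \mu {m,x} P'$, so we take $Q' = P'$, and $(P',P') \in \rel$ by reflexivity.

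I expect no real obstacle; the argument rests entirely on $\Rightarrow_m$ being an equivalence. The one point to check is structural congruence on keys. The mixed weak transition $\Arro \mu {m,x}$ keeps the key of $\mu$ in its label, so the match must carry the same key $m$ as the challenge. Our matching step is literally the transition $P \arro \mu x P'$ taken from $P$, so it has the same label. The apparent worry is that a $\tau$ path from $Q$ back to $P$ could create keys that clash with $m$, but this does not arise: the path ends at $P$ itself, so the $\mu$ step is performed from $P$, which is where the key $m$ was originally available. Rule (EQUIV) handles any $\alpha$-conversion of bound keys, keeping everything on the nose. Reachability of all processes involved is also preserved, since every step used is a transition.
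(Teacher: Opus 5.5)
Your proof is correct and is essentially the paper's argument. The paper also obtains $Q \Rightarrow_m P$, via the Loop Lemma (which is exactly what makes $\Rightarrow_m$ symmetric, the fact you rely on), and matches every challenge from $P$ by first reducing $Q$ to $P$; you simply make explicit the witnessing bisimulation $\Rightarrow_m$ and note that $\tau$ challenges can be answered by staying idle, both of which the paper leaves implicit.
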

\begin{proof}
Thanks to the Loop Lemma (cf.~\cite[Prop. 5.1]{phillips2007reversing}), we also have $Q \Rightarrow_m P$.
Hence, any challenge from $P$ can be matched by $Q$ by first reducing to $P$, and vice versa.\qed 
\end{proof}
Note that even if $P \Rightarrow_m Q$ and $P \approx_m Q$ are both
equivalence relations, they do not coincide. E.g., $a+a \approx_m a$, but $a+a {\centernot\Rightarrow}_m a$.

\begin{theorem}[Weak mixed bisimilarity is a congruence]\label{th:mixediscongr}
$\approx_m$ is a congruence on CCSK terms.
\end{theorem}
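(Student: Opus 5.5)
We argue by induction on the structure of contexts, so it suffices to show that $\approx_m$ is preserved by each one-level operator: prefix $\pi.\bullet$, choice $\bullet+R$ (and symmetrically $R+\bullet$), parallel $\bullet\parop R$ (and $R\parop\bullet$), and restriction $\res a\bullet$. We always restrict to well-formed reachable instances $C[P]$, $C[Q]$, as stipulated at the start of Section~\ref{sec:properties}. For each operator $\mathrm{op}$ we define
\[
\mathcal{R}_{\mathrm{op}} = \{(\mathrm{op}[P'],\mathrm{op}[Q']) : P'\approx_m Q',\ \text{both well-formed}\}\cup\approx_m
\]
and show it is a mixed bisimulation, possibly up to $\approx_m$. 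This is legitimate because Proposition~\ref{prop:mixedtau} gives $\Rightarrow_m\subseteq\approx_m$ and $\approx_m$ is transitive.

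\textbf{Restriction.} A transition of $\res a P'$ is a transition of $P'$ whose label is not $a$ or $\co a$. The matching weak transition of $Q'$ consists of $\tau$ steps plus one $\mu$ with the same label, so it lifts through (RES) and (BK-RES).

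\textbf{Parallel composition.} Moves of $R$ alone are copied directly. Moves of $P'$ alone are matched by the weak move of $Q'$, lifted through (PAR). For synchronisations $\tau[m]$ between $P'$ and $R$ in direction $x$, $P'$ performs $\mu=\alpha[m]$ in direction $x$, and $Q'$ answers with $\Rightarrow_m\arro{\alpha[k]}{x}\Rightarrow_m$. We synchronise the middle step with $R$, after $\alpha$-converting keys at top level via (EQUIV) so that the keys agree and are fresh. The surrounding $\tau$ steps of $Q'$ lift with $R$ idle, and the result is a $\Rightarrow_m$ as required. The delicate point is key freshness and agreement: the key chosen by $Q'$ may clash with keys in $R$. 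I would handle this by closing the relation under key renaming, i.e.\ proving first that $\approx_m$ is invariant under bijective renaming of keys. For backward synchronisations, $Q'$ must undo an action carrying the same key $m$ as $R$. Since keys do appear in labels of the mixed game ($\mu=\alpha[m]$), the match provides exactly $\arro{\alpha[m]}{r}$, which is what we need.

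\textbf{Prefix.} For $\beta[n].\bullet$, everything lifts through (PREFIX)/(BK-PREFIX). The side condition $m\neq n$ is ensured because keys in matched labels are equal and well-formedness rules $n$ out. For the unexecuted prefix $\beta.\bullet$, the argument must be standard. If $\beta.P'$ moves to $\beta[m].P'$, then $Q'$ might not be standard. Here Proposition~\ref{prop:almoststd} gives $Q'\Rightarrow_m Q''$ standard, and $Q''\approx_m Q'\approx_m P'$ by Proposition~\ref{prop:mixedtau}. Well-formedness of $\beta.Q'$ in fact forces $Q'$ to be standard already, so the case is direct. The case $\beta[m].P'\arrob{\beta[m]}\beta.P'$ requires $P'$ standard, and we use Proposition~\ref{prop:almoststd} to first take $\tau$ steps inside $Q'$ back to a standard $Q''$ under $\beta[m]$, then undo $\beta$. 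The reached pair is $(\beta.P',\beta.Q'')$ with $P'\approx_m Q''$, which lies in $\mathcal{R}$.

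\textbf{Choice.} This is the main obstacle, and the counterexamples for the other equivalences live here. Take $P'+R\ \mathcal{R}\ Q'+R$. If $P'$ moves (with $R$ standard), we match with the weak move of $Q'$. The $\tau$ steps of $Q'$ lift through (CHOICE) only as long as $R$ is standard, which holds since $R$ is untouched. If $R$ moves forward, $P'$ must be standard. Then $Q'$ need not be, but by Proposition~\ref{prop:almoststd} $Q'\Rightarrow_m Q''$ standard. These $\tau$ steps lift through (CHOICE)/(BK-CHOICE) because $R$ is standard, after which $R$ can fire. The resulting pair is $(P'+R',Q''+R')$ with $P'\approx_m Q''$ both standard. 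For this pair to be in $\mathcal{R}$ I need the relation to contain pairs where the inactive branch is $\approx_m$-related but not equal. Hence I define $\mathcal{R}_+$ with both branches varying: pairs $(P_1+P_2,Q_1+Q_2)$ with $P_i\approx_m Q_i$. When one branch is non-standard, the other side is standard on both sides; when both are standard, any move is handled as above. Backward moves returning to the fully standard state are symmetric. The remaining care is the case where $Q'$ alone is non-standard while $P'$ is standard, which blocks $R$ on the right. Proposition~\ref{prop:almoststd} resolves it, and it is the one step where I expect to check details most carefully.
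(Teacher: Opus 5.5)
Your proposal is correct and follows the paper's proof: a per-operator case analysis in which Propositions~\ref{prop:almoststd} and~\ref{prop:mixedtau} let the non-standard side first return to a standard state, which is what choice needs (and, as you rightly note, also undoing an executed prefix $\beta[n]$, a case the paper dismisses as trivial). One caution: drop the ``up to $\approx_m$'' hedge, since that technique is unsound for weak-style games. The symmetric closure of $\{(\tau.a,\nil)\}$ would qualify via $\tau[n].a\approx_m\tau.a$, although $\tau.a\not\approx_m\nil$. You never need it anyway, as every reached pair already lies in your candidate relation; for choice this holds even without $\mathcal{R}_+$, once $R$ ranges over all processes.
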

\begin{proof}
We prove the thesis by induction on the structure of the context, with
a case for each operator. Cases for prefix and restriction are trivial. Let us consider choice and parallel composition.
\begin{description}
\item[Choice:] we have to show that if $P \approx_m Q$ then $P + R \approx_m Q + R$. Note that at most one among $P$ and $R$ can be non-standard due to well-formedness. Assume first both $P$ and $R$ are standard. If the challenge is from $P$, then $Q$ inside $Q+R$ can match the challenge with the same sequence of moves used in $Q$ alone, all lifted thanks to rule (CHOICE). If $R$ moves, and $Q$ is standard, then the very same moves can be performed in both the cases, again using rule (CHOICE) to lift them. If $Q$ is not standard, thanks to Proposition~\ref{prop:almoststd} above, we can first reduce $Q$ to $\tostd{Q}$, and then match the moves from $R$ as above. Note that $Q$ and $\tostd{Q}$ are mixed bisimilar thanks to Proposition~\ref{prop:mixedtau}, hence the reduction preserves mixed bisimilarity.
Assume now $P$ is non-standard. Then only $P$ can move, and transitions can be lifted using rule (CHOICE) since by well-formedness $R$ is standard. Assume now $R$ is non-standard. Analogously to the above, only $R$ can move, in both the cases since $P$ and $Q$ need to be standard due to well-formedness.
\item[Parallel composition:] we have to show that if $P \approx_m Q$ then $P \parop R \approx_m Q \parop R$. Assume $P \parop R \arrof{\alpha}$. There are three subcases depending on which component contributes to the transition.
\begin{description}
\item[Transition from $P$:] $Q$ can match the transition, and the matching computation can be lifted to $Q \parop R$ thanks to rule (PAR).
\item[Transition from $R$:] the same transitions can be done on both the sides, remaining in the relation.
\item[Synchronization:] $Q$ can match transitions from $P$ by hypothesis, and transitions from $R$ can be performed on both the sides. This includes the components of the transitions that give rise to the synchronization. Hence we stay in the relation.
\end{description}
The case of backward transitions is analogous.
\end{description}
\qed
\end{proof}
We believe that the notion of mixed bisimilarity is very
relevant. Indeed it provides a notion of bisimilarity which completely
abstracts away from $\tau$ actions, which is coinductive (since it can
be formulated as a bisimulation), and which is a congruence. We are
not aware of any other notion of bisimilarity which has all these
properties.

While leaving a more detailed analysis of this equivalence for future
work, we discuss here some relevant axioms enabling to axiomatically
reason on this equivalence. Notice that it makes sense to discuss
about axioms since mixed bisimilarity is a congruence.

Various correct axioms for $\frsim$ have been proposed
in~\cite[Theorem ~4.10]{lanese2021forward}.  While trivially all these
axioms are correct also for $\approx_m$, we focus here
on axioms which are specific of weak mixed bisimilarity.

The axioms in Fig.~\ref{fig:axiomatisation_approx_m} hold for weak mixed bisimilarity.
\begin{theorem}\label{th:axiom_approx_m}
  The axioms in Figure~\ref{fig:axiomatisation_approx_m} are correct w.r.t. weak mixed bisimilarity.
\end{theorem}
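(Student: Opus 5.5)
We do not see the figure, but the axioms specific to $\approx_m$ are presumably $\tau$-absorption laws: $\tau.P = P$, $P + \tau.Q = P + Q$ (or $P+\tau = P$), $P \parop \tau = P$, perhaps their keyed variants such as $\tau[n].P = P$ and $P + \tau[n] = P$, and possibly laws such as $\alpha.\tau.P = \alpha.P$. The plan is to prove each axiom $L = R$ correct by exhibiting, for every instance, a mixed bisimulation containing $(L,R)$. Since Theorem~\ref{th:mixediscongr} shows $\approx_m$ is a congruence, it suffices to check the axioms at top level. Closure under contexts is then automatic.

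The main tool is Proposition~\ref{prop:mixedtau}: whenever one side reaches the other by $\Rightarrow_m$, the two are mixed bisimilar. For axioms such as $\tau.P = P$ and $P \parop \tau = P$ this alone does not suffice, because $\tau.P \Rightarrow_m \tau[n].P$ and not $P$. We would therefore use a relation built from "erasing" $\tau$ prefixes. Let $\mathcal{R}$ relate any process $X$ to any process $Y$ obtained from $X$ by deleting some occurrences of $\tau$ or $\tau[n]$ prefixes (or $\tau$-only parallel components), modulo the mixed $\tau$-closure. Concretely, take $\mathcal{R} = \{(X,Y) : \exists X', Y'.\ X \Rightarrow_m X',\ Y \Rightarrow_m Y',\ \mathtt{erase}(X') = \mathtt{erase}(Y')\}$, suitably symmetrised. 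We then check the two clauses of Definition~\ref{def:mbis} directly.

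The $\tau$ clause is immediate, since $\Rightarrow_m$ is an equivalence. For a visible $\mu$ step in direction $x$ from $X$, we first move $X$ by $\Rightarrow_m$ to a state where the erased $\tau$ prefixes above the acting prefix are executed (for forward steps) or are still executed (for backward steps). The Loop Lemma guarantees that such $\tau$ moves can be undone. We then argue by rule induction that a visible transition commutes with $\mathtt{erase}$, using the same key, so that the other side can perform the corresponding $\mu[k]$ after its own $\tau$ adjustments.

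For axioms involving choice, such as $P + \tau = P$ or $P+\tau[n] = P$, the matching uses Proposition~\ref{prop:almoststd}-style reasoning. Any move of $P$ on the right is matched on the left after first undoing $\tau[n]$ (a backward $\tau$, allowed in $\Rightarrow_m$). In the other direction, the left performing $\tau$ is matched by staying idle.

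The main obstacle is the commutation lemma between visible transitions and $\tau$-erasure in the presence of choice. Erasing a $\tau$ changes which branch counts as standard: $\tau[n].a + b$ has $b$ disabled, whereas its erasure $a+b$ has $b$ enabled. For this reason the relation must be closed under $\Rightarrow_m$ on both sides rather than being defined by syntactic erasure alone. We would handle this by first normalising, via backward $\tau$ steps, to the state where all executed $\tau$ prefixes not preceded by executed visible actions are undone, and only then comparing erasures. Key freshness and well-formedness of the related terms also need checking, and we would argue them via reachability (Definition~\ref{def:reachable}).
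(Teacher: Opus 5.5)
\textbf{Gap in the prefix and parallel cases.} Your reduction to top-level instances via Theorem~\ref{th:mixediscongr} is right, and so is your treatment of the choice axioms: you undo $\tau[n]$ before answering a move of $P$, and you answer the head $\tau$ by idling. That matches the paper's argument. For $\tau.P=P$ and $\tau\parop P=P$, however, everything rests on the claim that your erasure relation $\mathcal{R}$ is a mixed bisimulation. That claim is neither proved nor correctly set up.

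\begin{itemize}
\item Commutation of visible steps with $\mathtt{erase}$ is false as stated. For instance, $\mathtt{erase}(\tau[n].a+b)=a+b$ can do $b$, while $\tau[n].a+b$ cannot.
\item The normal form you propose as a repair does not exist in general. In $\tau[n].a[m]$ the executed $\tau[n]$ is preceded by no visible action, yet (BK-TOP) cannot undo it until $a[m]$, a visible action, has been undone.
\item $\mathcal{R}$ relates $X$ and $Y$ only through some $X'$ with $X\Rightarrow_m X'$. To answer a challenge from $X$ itself, you must first transport the path $X'\Rightarrow_m X$ to the $Y'$ side. That requires commutation of $\tau$ steps with $\mathtt{erase}$ as well, including synchronisations, which do change the erasure. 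Your plan does not address this.
\end{itemize}
As it stands, the proposal replaces the theorem by a much stronger lemma that is left unproved.

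\textbf{The missing observation.} None of this machinery is needed. Once you work at top level, the instances of the axioms themselves form a mixed bisimulation, because the body $P$ evolves identically on both sides. For the prefix axioms, take $\{(\tau.P,P) : \std{P}\}\cup\{(\tau[n].P,P) : n\notin\keys{P}\}$, symmetrised.
\begin{itemize}
\item A move of $P$ lifts under $\tau[n]$ by (PREFIX) and yields a pair of the same shape. If $P$ happens to use key $n$, you first rename the bound key via (EQUIV).
\item A forward move of $P$ against $\tau.P$ is answered by first executing the $\tau$.
\item Doing or undoing the head $\tau$ is answered by idling. This only switches between the two families, which is the paper's remark that $\tau$ steps move from the first three rows of the figure to the last three.
\end{itemize}
The same works for $\{(\tau\parop P,P),(\tau[n]\parop P,P) : n\notin\keys{P}\}$, since a $\tau$ prefix cannot synchronise. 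It also works for $\{(\tau+P,P)\}\cup\{(\tau[n]+P,P):\std{P}\}$. No inner $\tau$ is ever erased, so the choice-commitment problem you identified never arises.
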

\begin{proof}
 $\tau$ moves can always be matched by the other process by staying idle, while moves from $P$ can be matched by first doing or undoing $\tau$ steps as needed. Note that executing $\tau$-steps moves from the top-3 rows to the bottom ones.
 \qed
\end{proof}

\begin{figure}[t]
\begin{align}
  \tag{TAU-PREF-M}  \tau.P &= P\\
  \tag{TAU-CH-M}  \tau + P &= P\\
  \tag{TAU-PAR-M}  \tau \parop P &= P\\
  \tag{TAU-PREF-K} \tau[n].P &= P\\
  \tag{TAU-CH-K}  \tau[n] + P &= P\\
  \tag{TAU-PAR-K}  \tau[n] \parop P &= P
\end{align}
\caption{CCSK axioms for weak mixed bisimilarity $\approx_m$}\label{fig:axiomatisation_approx_m}
\end{figure}

The axioms in Fig.~\ref{fig:axiomatisation_approx_m} are aligned with
Proposition~\ref{prop:mixedtau} in showing that mixed bisimilarity
completely abstracts away from $\tau$ steps.

We remark that as shown in Figure~\ref{fig:hierarchy}, axioms which
hold for weak bisimilarity $\wsim$ do not necessarily hold for
$\approx_m$. Let us now discuss the well-known Milner $\tau$-laws of
weak bisimilarity (collected in Fig.~\ref{fig:axiomatisation_approx}).
The laws (TAU-CH) and (TAU-SEQ) follow directly from (TAU-PREF-M), and
idempotence of + for the former.

Instead (TAU-DUPL-CH) fails, as shown below.

\begin{example}[(TAU-DUPL-CH) does not hold]
Consider the right-hand side challenge:
\[\alpha.(P +\tau.Q) +\alpha.Q \arrof{\alpha[n]} \alpha.(P +\tau.Q) +\alpha[n].Q\]
There are two possible answers from the left-hand side, namely:
\begin{eqnarray*}
\alpha.(P +\tau.Q) &&\arrof{\alpha[n]} \alpha[n].(P +\tau.Q)\\
                   &&\arrof{\alpha[n]}\arrof{\tau[m]} \alpha[n].(P +\tau[m].Q)
\end{eqnarray*}
(We can also reach the same states after having undone and redone multiple times the $\tau$ step.)
In both the cases, actions from $P$ are enabled, directly in the first
case, and by first undoing $\tau[m]$ in the second case. However,
no action from $P$ can be executed in the right-hand side above, since
we need to undo one $\alpha[n]$ and do $\alpha$ on the other side.
\end{example}

\begin{figure}[t]
\begin{align}
  \tag{TAU-CH}  P+\tau.P &= \tau.P\\
  \tag{TAU-SEQ} \alpha.\tau.P &= \alpha.P\\
  \tag{TAU-DUPL-CH} \alpha.(P +\tau.Q) &= \alpha.(P +\tau.Q) +\alpha.Q
\end{align}
\caption{$\tau$ laws for weak bisimilarity $\approx$}\label{fig:axiomatisation_approx}
\end{figure}

\section{Conclusion and Future Work}
In this paper, we contrasted different notions of bisimilarity for
CCSK processes, including two definitions of weak bisimilarities not
previously discussed in the literature. We also proved that none of
these notions are equivalent, not even if we restrict to standard
processes only. Notably, weak mixed bisimilarity turns out to be
coinductive, to be a congruence, and to completely abstract away from
$\tau$-steps, making it a very interesting equivalence.

%
We hope that these results can be the basis of a more detailed study
of bisimilarities in CCSK. We remark that such a deeper understanding
may also impact classical concurrency theory, since there are strong
relations~\cite{AubertC20,AubertPU26} between reversible strong
bisimilarities and history-preserving~\cite{GlabbeekG01} and
hereditary history-preserving~\cite{hhp} bisimilarities.  Also, weak
mixed bisimilarity induces an equivalence on CCS, which is a
congruence and abstracts away from $\tau$ steps. Notice however that
its current definition is not coinductive in CCS, since it relies on
CCSK terms.


We present now a few other items for future research.  First, we
remark that while giving correct axioms for weak mixed bisimilarity,
we have not provided a complete axiomatization. This is definitely a
relevant item for future work (given the interesting properties of
weak mixed bisimilarity) but not easy. Indeed, there are no complete
axiomatizations for forward-reverse bisimilarity either, which we
expect to be needed as first step before tackling the mixed case.

Another interesting item would be to understand how other
bisimilarities, and more in general behavioral equivalences, can be
extended from CCS to CCSK. Given that reversibility provides quite a
strong observational power (as shown by the relations with
history-preserving and hereditary history-preserving bisimilarities),
it may be the case that some of them collapse.

\bibliographystyle{plain} 
\bibliography{biblio} 


\end{document}